\documentclass[10pt,journal,compsoc]{IEEEtran}

\usepackage{cite}
\usepackage{amsmath,amssymb,amsfonts,amsthm,amscd,bm,bbm}
\usepackage{mathtools}
\usepackage[pdftex]{graphicx}
\graphicspath{{./figs/}}
\usepackage{subfig}
\usepackage{algorithmic}
\usepackage{algorithm}
\usepackage{multirow}
\usepackage{colortbl}
\usepackage{array,booktabs}
\usepackage{xcolor}
\usepackage{makecell}
\usepackage{pifont}

\newtheorem{theorem}{Theorem}

\newtheorem{lemma}[theorem]{Lemma}

\DeclareMathOperator*{\argmax}{arg\,max}
\DeclareMathOperator*{\argmin}{arg\,min}

\DeclareMathOperator{\diag}{diag}
\DeclareMathOperator{\Tr}{Tr}

\newcommand\Energy{\mathcal{E}}

\newcommand\E{\mathbb{E}}
\newcommand\kk{K}
\newcommand\kkk{h}
\newcommand\Hk{{\mathcal{H}}_{\kk}}
\newcommand\HH{\mathcal{H}}
\newcommand\C{{\mathcal{C}}}
\newcommand\tC{{\widetilde{\C}}}
\newcommand\OO{{\mathcal{O}}}
\newcommand\W{{\mathcal{W}}}
\newcommand\Zt{Y}
\newcommand{\Ind}[1]{\mathbbm{1}_{#1}}
\newcommand\e{e}
\newcommand\om{\omega}

\hyphenation{op-tical net-works semi-conduc-tor}

\begin{document}

\title{Kernel k-Groups via Hartigan's Method}

\author{Guilherme~Fran\c ca,~Maria L.~Rizzo~and~Joshua T.~Vogelstein
\IEEEcompsocitemizethanks{
\IEEEcompsocthanksitem GF
is with the Mathematical Institute for Data Science (MINDS), Johns Hopkins
University. 
E-mail: guifranca@jhu.edu
\IEEEcompsocthanksitem MLR is with the Department of Mathematics
and Statistics, Bowling Green State University. 
E-mail: mrizzo@bgsu.edu
\IEEEcompsocthanksitem JTV is
with the Center for Imaging Science,
the Department of Biomedical Engineering and Institute for Computational
Medicine, Johns Hopkins University. 
E-mail: jovo@jhu.edu}
}


\IEEEtitleabstractindextext{%
\begin{abstract}
Energy statistics was proposed by Sz\' ekely in the 80's inspired by
Newton's gravitational potential in classical mechanics and it provides
a model-free
hypothesis test for equality of distributions. In its original form, energy
statistics was formulated in
Euclidean spaces. More recently, it
was generalized to metric spaces of negative type.
In this paper, we consider a formulation for the clustering problem
using a weighted version of
energy statistics in spaces of negative type.
We show that this approach leads to a
quadratically constrained
quadratic program in the associated kernel space, establishing
connections with graph partitioning problems and
kernel methods in machine learning.
To find local solutions of such an optimization problem,
we propose kernel k-groups, which is an extension of Hartigan's method to kernel spaces.
Kernel k-groups is cheaper than spectral clustering and has the same computational cost
as kernel k-means (which is based on Lloyd's heuristic) but
our numerical results show an improved performance,
especially in higher dimensions. Moreover, we verify the efficiency of
kernel k-groups in community detection in sparse stochastic block models which
has fascinating applications in several areas of science.
\end{abstract}

\begin{IEEEkeywords}
clustering, energy statistics, kernel methods, graph clustering,
community detection, stochastic block model.
\end{IEEEkeywords}}

\maketitle

\IEEEdisplaynontitleabstractindextext

%
\IEEEpeerreviewmaketitle

\IEEEraisesectionheading{\section{Introduction}\label{sec:introduction}}

%
%
%
%

\IEEEPARstart{E}{nergy Statistics} \cite{Szkely2013,Szkely2017}
is based on a
notion of statistical potential energy between probability distributions,
in close analogy to Newton's gravitational potential in classical mechanics.
When probability distributions are different, the
``statistical potential energy'' diverges as sample size increases,
while tends
to a nondegenerate limit distribution when probability
distributions are equal.
Thus, it provides a model-free hypothesis test for equality of
distributions which is achieved under minimum energy.

Energy statistics has been applied to several goodness-of-fit
hypothesis tests, multi-sample tests of equality of distributions,
analysis of variance \cite{RizzoVariance}, nonlinear dependence tests through
distance covariance and distance correlation~\cite{Szekely2007-mm},
which generalizes the Pearson
correlation coefficient, and hierarchical clustering
by extending Ward's method of minimum variance
\cite{RizzoClustering};
see \cite{Szkely2013,Szkely2017} for an overview of energy
statistics and its applications.
Moreover, in Euclidean spaces, an application of
energy statistics to clustering
was recently proposed \cite{Kgroups} and the method was named
\emph{k-groups}.

In its original formulation, energy statistics has a compact representation
in terms of expectations of pairwise Euclidean distances, providing
straightforward empirical estimates.
More recently, the notion of distance covariance was further
generalized from Euclidean
spaces to metric spaces of negative type \cite{Lyons}. Furthermore,
the link between energy distance based tests and kernel
based tests has
been recently established \cite{Sejdinovic2013}
through an asymptotic equivalence between generalized energy distances and maximum
mean discrepancies (MMD), which are distances between embeddings of
distributions in reproducing kernel Hilbert spaces (RKHS).
Even more recently, generalized energy distances and kernel methods have been demonstrated to be exactly equivalent, for all finite samples~\cite{Shen2018-st}.
This equivalence immediately relates energy statistics to
kernel methods often used in machine learning and form the basis
of our approach in this paper.

Clustering is an important unsupervised learning problem and
has a long history in statistics and machine learning, making it
impossible to mention all important contributions in a short space.
Perhaps, the most used method is k-means \cite{Lloyd,MacQueen,Forgy}, which
is based on Lloyd's heuristic \cite{Lloyd} of iteratively computing
the means of each cluster and then assigning points to
the cluster with closest center. The only statistical
information about each cluster comes from its mean, making the method sensitive
to outliers. Nevertheless, k-means works very well when data is
linearly separable in Euclidean space. Gaussian mixture models (GMM) is
another very common approach, providing more flexibility than k-means;
however, it still makes strong assumptions about the distribution of
the data.

To account for nonlinearities, kernel methods were introduced
\cite{Smola,Girolami}. A Mercer kernel \cite{Mercer} is used to implicitly
map data points to a RKHS, then clustering can be performed in the associated
Hilbert space by using its inner product. However, the kernel choice remains
the biggest challenge since there is no principled theory to construct a kernel
for a given dataset, and usually a kernel introduces hyperparameters that
need to be carefully chosen. A well-known kernel based clustering method
is kernel k-means, which is precisely k-means
formulated in the feature space \cite{Girolami}.
Furthermore, kernel k-means algorithm
\cite{Dhillon2,Dhillon} is still based on Loyd's heuristic.
We refer the reader to \cite{Filippone} for a survey of clustering
methods.

Besides Lloyd's approach to clustering 
there is an old heuristic
due to Hartigan \cite{Hartigan1975,Hartigan1979} that
goes as follows: for each data point, simply assign it to a cluster
in an optimal way such that a loss function is minimized.
While Lloyd's method only iterates if some cluster contains a point
that is closer to the mean of another cluster, Hartigan's method may iterate
even if that is not the case, and moreover, it takes into account the motion
of the means resulting from the reassignments. In this sense, Hartigan's
method may potentially escape local minima of Lloyd's method.
In the Euclidean case, this was shown to be the case \cite{Telgarsky}.
Moreover, the advantages of Hartigan's over Lloyd's method have been verified
empirically \cite{Telgarsky,Slonin}.
Although it was observed to be as fast as Lloyd's method, no complexity
analysis was provided.

\subsection*{Contributions}
Although k-groups considers clustering from energy statistics
in the particular Euclidean case \cite{Kgroups},
the precise optimization problem behind this approach
remains obscure, as well as the connection with other methods
in machine learning.
One of the goals of this paper is to fill these gaps. More precisely,
our main contributions are as follows:
\begin{itemize}
\item We clearly formulate the optimization problem for clustering
based on energy statistics.
\item Our approach is not limited
to the Euclidean case but holds for general arbitrary spaces
of negative type.
\item Our approach reveals connections
between energy statistics based clustering and existing methods
such as kernel k-means and graph partitioning problems.
\item We extend Hartigan's method to  kernel spaces, which leads
to a new clustering method that we call \emph{kernel k-groups}.
\end{itemize}
Furthermore, we show that kernel k-groups
has the same complexity as kernel k-means, however,
our numerical results provide compelling evidence that
kernel k-groups
is more accurate and robust, especially in high dimensions.

Using the standard kernel defined by energy
statistics, our experiments illustrate
that kernel k-groups
is able to perform accurately on data sampled from
very different distributions, contrary to k-means and GMM, for instance.
More specifically, kernel k-groups performs
closely to k-means and GMM on normally distributed data, while
it is significantly better on data that
is not normally distributed.
Its superiority in high dimensions
is striking, being more accurate than k-means and GMM
even in Gaussian settings. We also illustrate the advantages of
kernel k-groups compared to kernel k-means and spectral clustering
on several real datasets.

\section{Review of Energy Statistics and RKHS}
\label{sec:background}

In this section, we introduce the main concepts from energy
statistics and its relation to
RKHS which form the basis of our work.
For more details we refer
to \cite{Szkely2013,Lyons,Sejdinovic2013}.

Consider random variables in $\mathbb{R}^D$
such that $X,X' \stackrel{iid}{\sim} P$ and
$Y,Y' \stackrel{iid}{\sim} Q$, where $P$ and $Q$ are cumulative
distribution functions with finite first moments.
The quantity
\begin{equation}
\label{eq:energy}
\Energy(P, Q) \equiv 2 \E \| X - Y\| - \E \| X - X' \| - \E \| Y - Y' \|,
\end{equation}
called \emph{energy distance} \cite{Szkely2013},
is rotationally invariant and nonnegative, $\Energy(P,Q) \ge 0$, where
equality
to zero holds if and only if $P = Q$.
Above, $\| \cdot \|$ denotes the
Euclidean norm in $\mathbb{R}^D$.
Energy distance
provides a characterization of equality of distributions, and
$\Energy^{1/2}$ is
a metric on the space of distributions.

The energy distance can be generalized as, for instance,
\begin{equation}
\label{eq:energy2}
\Energy_\alpha(P, Q) \equiv
2 \E \| X - Y\|^{\alpha} - \E \| X - X' \|^{\alpha} -
\E \| Y - Y' \|^{\alpha}
\end{equation}
where $0<\alpha\le 2$. This quantity is also nonnegative,
$\Energy_\alpha(P,Q) \ge 0$. Furthermore, for $0<\alpha<2$ we have that
$\Energy_\alpha(P,Q) = 0$ if and only if $P=Q$, while for $\alpha=2$
we have $\Energy_2(P,Q) = 2\| \E(X) - \E(Y) \|^2$ which shows that
equality to zero only requires
equality of the means, and thus $\Energy_2(P,Q)=0$ does
not imply equality of distributions.

The energy distance can be even further generalized.
Let $X, Y \in \mathcal{X}$  where $\mathcal{X}$ is an arbitrary space endowed
with a \emph{semimetric of negative type}
$\rho: \mathcal{X}\times\mathcal{X} \to \mathbb{R}$, which is required
to satisfy
\begin{equation}
\label{eq:negative_type}
\sum_{i,j=1}^n c_i c_j \rho(X_i, X_j) \le 0,
\end{equation}
where $X_i \in \mathcal{X}$ and $c_i \in \mathbb{R}$ such that
$\sum_{i=1}^n c_i = 0$. Then, $\mathcal{X}$ is called a \emph{space of
negative type}.
We can thus replace $\mathbb{R}^D$ by $\mathcal{X}$ and
$\| X - Y \|$ by $\rho(X , Y)$ in the definition \eqref{eq:energy}, obtaining
the \emph{generalized energy distance}
\begin{equation}
\label{eq:energy3}
\Energy(P, Q) \equiv 2 \E \rho(X,Y) - \E \rho(X, X') - \E \rho(Y,Y').
\end{equation}
For spaces of negative type, there exists a Hilbert space $\mathcal{H}$ and
a map $\varphi: \mathcal{X} \to
\mathcal{H}$ such that
$\rho(X, Y) = \| \varphi(X) - \varphi(Y) \|_{\mathcal{H}}^2$. This
allows us to compute quantities related to probability distributions over
$\mathcal{X}$ in the associated Hilbert space $\mathcal{H}$.
Even though the semimetric
$\rho$ may not satisfy the triangle inequality,
$\rho^{1/2}$ does since it can be shown to be a proper metric.
Our 
formulation, proposed in the next section,
will be based on the generalized
energy distance \eqref{eq:energy3}.

There is an equivalence between energy distance,
commonly used in statistics,
and distances between embeddings of distributions in
RKHS, commonly used in machine learning.
This equivalence was established
in \cite{Sejdinovic2013}. Let us first recall the definition of
RKHS. Let $\HH$ be a Hilbert space of real-valued functions
over $\mathcal{X}$. A function
$\kk : \mathcal{X} \times \mathcal{X} \to
\mathbb{R}$ is a reproducing kernel of $\HH$ if it satisfies
the following two conditions:
\begin{enumerate}
\item $\kkk_x \equiv \kk(\cdot, x) \in \HH$
for all $x \in \mathcal{X}$;
\item $\langle \kkk_x, f \rangle_{\HH} = f(x)$ for
all $x\in\mathcal{X}$ and $f\in \HH$.
\end{enumerate}
In other words, for any $x \in \mathcal{X}$ and any function $f \in \HH$,
there is a unique
$\kkk_x \in \HH$ that reproduces $f(x)$ through the inner product
of $\HH$.
If such a \emph{kernel}
function $\kk$ exists, then $\HH$ is called a RKHS. The above two
properties immediately imply that $\kk$ is symmetric and positive
semidefinite.
Defining the Gram matrix $G$ with
elements $G_{ij} = \kk(x_i,x_j)$, this is equivalent to $G=G^\top$ being
positive semidefinite, i.e., $v^\top G \, v \ge 0$ for any vector
$v \in \mathbb{R}^n$.

The Moore-Aronszajn theorem
\cite{Aronszajn}
establishes the converse of the above paragraph.
For every symmetric
and positive semidefinite function $\kk: \mathcal{X}\times \mathcal{X} \to
\mathbb{R}$, there is an associated RKHS, $\Hk$,
with reproducing
kernel $\kk$. The map $\varphi: x \mapsto \kkk_x \in \Hk$ is called
the canonical \emph{feature map}. Given a kernel $\kk$,
this theorem enables us to define an embedding of a probability measure
$P$ into the RKHS as follows: $P \mapsto \kkk_P \in
\Hk$ such that
$\int f(x) d P(x) = \langle f, \kkk_P \rangle$ for all $f \in \Hk$,
or alternatively, $\kkk_P \equiv \int \kk( \, \cdot \,, x)  d P(x)$.
We can now  introduce the
notion of distance between two probability measures using the inner product
of $\Hk$, which is called the maximum mean discrepancy (MMD) and
is given by
\begin{equation}
\label{eq:mmd}
\gamma_\kk(P,Q) \equiv \| \kkk_P - \kkk_Q \|_{\Hk}.
\end{equation}
This can also be written as \cite{Gretton2012}
\begin{equation}\label{eq:mmd2}
\gamma_\kk^2(P,Q) = \E \kk(X,X') + \E \kk(Y,Y') - 2 \E \kk(X, Y)
\end{equation}
where $X,X' \stackrel{iid}{\sim} P$ and $Y,Y'\stackrel{iid}{\sim} Q$.
From the equality between \eqref{eq:mmd} and \eqref{eq:mmd2} we also
have $\langle \kkk_P, \kkk_Q \rangle_{\Hk} = \E \, \kk(X, Y)$.
Let us mention that a family of clustering algorithms based on maximizing the quantity
\eqref{eq:mmd2} was proposed in \cite{Long:2007}.

The following important result shows that semimetrics of negative
type and symmetric positive semidefinite kernels are closely related
\cite{Berg1984}. Let $\rho: \mathcal{X} \times \mathcal{X} \to \mathbb{R}$
and $x_0 \in \mathcal{X}$ an arbitrary but fixed point.
Define
\begin{equation}
\label{eq:kernel_semimetric}
\kk(x,y) \equiv
\tfrac{1}{2} \left[  \rho(x,x_0) + \rho(y,x_0) - \rho(x,y)\right].
\end{equation}
Then, it can be shown that
$\kk$ is positive semidefinite if and only if $\rho$ is a semimetric
of negative type.
We have a family of kernels, one for each choice of $x_0$. Conversely,
if $\rho$ is a semimetric of negative type and $\kk$ is a kernel in this
family, then
\begin{equation}
\label{eq:gen_kernel}
\begin{split}
\rho(x,y) &= \kk(x,x) + \kk(y,y) -2\kk(x,y) \\
&=  \| \kkk_x - \kkk_y \|^2_{\Hk}
\end{split}
\end{equation}
and the canonical feature map
$\varphi: x \mapsto \kkk_x$ is injective \cite{Sejdinovic2013}.
When these conditions are satisfied, we say that the kernel $\kk$
generates the semimetric $\rho$.
If two different kernels generate the same $\rho$, they are
said to be equivalent kernels.

Now we can state the equivalence between the generalized
energy distance \eqref{eq:energy3} and
inner products on RKHS, which is one of the main results of
\cite{Sejdinovic2013}. If $\rho$ is a semimetric
of negative type and $\kk$ a kernel that generates $\rho$, then
replacing \eqref{eq:gen_kernel} into
\eqref{eq:energy3}, and using \eqref{eq:mmd2}, yields
\begin{equation} \label{eq:Erho}
\begin{split}
\Energy(P, Q) &=
2 \left[ \E \, \kk(X, X') + \E \, \kk(Y, Y') - 2\E \, \kk(X, Y)\right]  \\
&= 2 \gamma_\kk^2(P,Q) .
\end{split}
\end{equation}
Due to \eqref{eq:mmd}, we can compute 
$\mathcal{E}(P, Q)$ between two probability distributions
using the inner
product of $\Hk$.

Finally, let us recall the main formulas from generalized energy statistics
for the test statistic of equality of distributions \cite{Szkely2013}.
Assume that we have data $\mathbb{X} = \{ x_1,\dotsc, x_n \}$, where
$x_i \in \mathcal{X}$, and $\mathcal{X}$ is a space of negative type.
Consider a disjoint partition $\mathbb{X} = \bigcup_{j=1}^k \C_j$, with
$\C_i \cap \C_j = \emptyset$.
Each expectation in the generalized energy distance
\eqref{eq:energy3}
can be computed
through the function
\begin{equation}
\label{eq:g_def}
g (\C_i, \C_j) \equiv
\dfrac{1}{n_i n_j}
\sum_{x \in \C_i}
\sum_{y \in \C_j} \rho(x, y) ,
\end{equation}
where $n_i = |\C_i|$ is the number of elements in partition
$\C_i$.
The \emph{within energy dispersion} is defined by
\begin{equation}
\label{eq:within}
W \equiv
\sum_{j=1}^{k} \dfrac{n_j}{2} g(\C_j, \C_j),
\end{equation}
and the \emph{between-sample energy statistic} is defined by
\begin{equation}
\label{eq:between}
S \equiv \!\!\!\!
\sum_{1 \le  i < j \le k } \dfrac{n_i n_{j}}{2 n} \left[
2 g(\C_i, \C_j) -
g(\C_i, \C_i) -
g(\C_j, \C_j)
\right],
\end{equation}
where $n = \sum_{j=1}^k n_j$.
Given a set of distributions
$\{ P_j\}_{j=1}^k$, where $x \in \C_j$ if and only if $x \sim P_j$,
the quantity $S$ provides
a test statistic for equality of distributions
\cite{Szkely2013}.
When the sample size is large enough, $n\to \infty$,
under the null hypothesis $H_0: P_1=P_2=\dotsm=P_k$, we have that
$S\to 0$,
and under
the alternative hypothesis $H_1: P_i \ne P_j$ for at least two $i\ne j$,
we have that $S \to \infty$.


\section{The Clustering Problem Formulation}
\label{sec:clustering_theory}

This section contains our main theoretical results. First, we
generalize the previous formulas from energy statistics by introducing
weights associated to data points (the reason for doing this is to establish connection
with graph partitioning problems later on).
Second, we formulate an optimization problem for clustering
in the associated RKHS, making connection with kernel methods in
machine learning.

Let $w(x)$ be a weight function associated to point $x \in \mathcal{X}$
and define
\begin{equation}
\label{eq:g_def2}
g(\C_i, \C_j) \equiv \dfrac{1}{s_i s_j} \sum_{x\in \C_i}\sum_{y\in\C_j}
w(x)w(y) \rho(x,y),
\end{equation}
where
\begin{equation}
s_i \equiv \sum_{x\in\C_i} w(x), \qquad
s \equiv \sum_{j=1}^k s_j.
\end{equation}
The  weighted version of the within energy dispersion
and between-sample
energy statistic are thus given by
\begin{align}
W &\equiv
\sum_{j=1}^{k} \dfrac{s_j}{2} g(\C_j, \C_j), \label{eq:within2} \\
S &\equiv \!\!\! \sum_{1 \le  i < j \le k } \!\!\!
\dfrac{s_i s_{j}}{2 s} \left[
2 g(\C_i, \C_j) -
g(\C_i, \C_i) -
g(\C_j, \C_j)
\right]. \label{eq:between2}
\end{align}
Note that if $w(x) = 1$ for every $x$ we recover the previous formulas.

Due to the test statistic for equality of distributions,
the obvious criterion for clustering data is to
maximize $S$ in \eqref{eq:between2},
which makes  each cluster as different as possible from the other ones.
In other words, given a set of points coming from different probability
distributions, the test statistic $S$ should attain a maximum when
each point is correctly classified as belonging to the cluster associated
to its probability distribution. The following
result
shows that maximizing $S$ is, however, equivalent to minimizing
$W$ in \eqref{eq:within2}.

\begin{lemma}
\label{th:minimize}
Let $\mathbb{X} = \{x_1,\dotsc,x_n\}$ where each data point
$x_i$ lives in a space $\mathcal{X}$ endowed with a semimetric $\rho:
\mathcal{X}\times\mathcal{X} \to \mathbb{R}$ of
negative type. For a fixed integer $k$,
the partition
$\mathbb{X} = \bigcup_{j=1}^k \C^\star_j$, where
$\C^\star_i \cap C^\star_j = \emptyset$ for
all $i\ne j$, maximizes the between-sample statistic $S$, defined
in equation \eqref{eq:between2}, if and only if
\begin{equation}
\label{eq:minimize}
\{ \C_1^\star,\dotsc,\C_k^\star \} = \argmin_{\C_1,\dotsc,C_k  } W(
\C_1, \dotsc, \C_k) ,
\end{equation}
where the within energy dispersion $W$ is defined by \eqref{eq:within2}.
\end{lemma}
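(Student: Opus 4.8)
The plan is to prove the stronger statement that $S + W$ equals a quantity that does not depend on the chosen partition $\{\C_1,\dotsc,\C_k\}$. Once this is in hand, the equivalence \eqref{eq:minimize} is immediate: writing $S = (\text{const}) - W$, a partition maximizes $S$ if and only if it minimizes $W$. The natural candidate for the constant is the \emph{total energy dispersion}, obtained by treating the whole sample as a single cluster,
\begin{equation*}
T \equiv \dfrac{s}{2}\, g(\mathbb{X},\mathbb{X}) = \dfrac{1}{2s}\sum_{x\in\mathbb{X}}\sum_{y\in\mathbb{X}} w(x)\,w(y)\,\rho(x,y),
\end{equation*}
which visibly depends only on $\mathbb{X}$ and the weight function $w$, not on how the points are grouped into clusters.

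To organize the bookkeeping I would introduce the shorthand $R_{ij} \equiv \sum_{x\in\C_i}\sum_{y\in\C_j} w(x)w(y)\rho(x,y)$, so that $g(\C_i,\C_j) = R_{ij}/(s_i s_j)$ and, by symmetry of $\rho$, $R_{ij}=R_{ji}$. In this notation \eqref{eq:within2} reads $W = \sum_j R_{jj}/(2s_j)$, the between statistic \eqref{eq:between2} reads $S = \tfrac{1}{2s}\sum_{i<j}\bigl[2R_{ij} - (s_j/s_i)R_{ii} - (s_i/s_j)R_{jj}\bigr]$, and the target becomes $T = \tfrac{1}{2s}\sum_{i,j} R_{ij}$. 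The proof then reduces to checking that the coefficient of each $R_{ij}$ in $S+W$ equals its coefficient $\tfrac{1}{2s}$ in $T$.

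For the off-diagonal terms this is routine: a pair $i\ne j$ contributes $2R_{ij}/(2s)=R_{ij}/s$ to $S$ and nothing to $W$, matching the coefficient $\tfrac{1}{2s}$ carried by each of $R_{ij}$ and $R_{ji}$ in $T$ (using $R_{ij}=R_{ji}$). The delicate step, and the only place where genuine cancellation occurs, is the diagonal terms $R_{ii}$. Collecting every appearance of a fixed $R_{ii}$ in $S$ — once with $i$ as the smaller index over partners $j>i$, once with $i$ as the larger index over partners $j<i$ — gives total coefficient $-\tfrac{1}{2s\,s_i}\sum_{j\ne i}s_j = -\tfrac{1}{2s\,s_i}(s-s_i) = -\tfrac{1}{2s_i}+\tfrac{1}{2s}$, where I use $\sum_{j\ne i}s_j = s-s_i$. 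Adding the contribution $+\tfrac{1}{2s_i}$ from $W$ cancels the $-\tfrac{1}{2s_i}$ exactly and leaves $\tfrac{1}{2s}$, as needed.

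Hence the coefficient of every $R_{ij}$ in $S+W$ equals $\tfrac{1}{2s}$, so $S+W = T$ for every partition. Since $T$ is partition-independent, maximizing $S$ over all partitions into $k$ clusters coincides with minimizing $W$, which is exactly the assertion of the lemma. The main obstacle I anticipate is not conceptual but combinatorial: keeping careful track of how each diagonal $R_{ii}$ is generated by the double role of the index $i$ in the ordered sum $\sum_{i<j}$, since a sign error or a missed term there is precisely what would break the cancellation $-\tfrac{1}{2s_i}+\tfrac{1}{2s_i}=0$.
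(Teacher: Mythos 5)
Your proof is correct and takes essentially the same route as the paper: both establish the identity $S + W = \tfrac{s}{2}\, g(\mathbb{X},\mathbb{X})$, a partition-independent constant, from which the equivalence between maximizing $S$ and minimizing $W$ is immediate. Your $R_{ij}$ is just $s_i s_j\, g(\C_i,\C_j)$, and your diagonal-coefficient cancellation $-\tfrac{1}{2s_i}+\tfrac{1}{2s_i}=0$ is precisely the step the paper performs when it collapses $\bigl[s - \sum_{j\ne i} s_j\bigr] s_i = s_i^2$ to complete the full double sum $\sum_{i,j} s_i s_j\, g(\C_i,\C_j)$.
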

\begin{proof}
From \eqref{eq:within2} and \eqref{eq:between2}
we have that
\begin{align}
& S + W
\nonumber \\
&= \dfrac{1}{2s} \sum_{\substack{i,j=1 \\ i\ne j}}^k s_i s_j g(\C_i, \C_j)
+ \dfrac{1}{2s} \sum_{i=1}^{k}
\bigg[ s -
\sum_{\substack{j= 1 \\ j\ne i}}^k s_j \bigg]
s_i g(\C_i, \C_i)
\nonumber \\
& = \dfrac{1}{2s} \sum_{i,j=1}^k s_i s_j g(\C_i, \C_j)
\nonumber \\
& = \dfrac{1}{2s} \sum_{x \in \mathbb{X}} \sum_{y \in \mathbb{X}}
w(x) w(y)\rho(x,y)
\nonumber \\
&= \dfrac{s}{2} g(\mathbb{X}, \mathbb{X}).
\end{align}
Since $g(\mathbb{X}, \mathbb{X})$ is independent
of the choice of partition, $\max_{\{ \C_i \}} S = - \max_{\{\C_i\}} W =
\min_{\{ \C_i \}}W$, as claimed.
\end{proof}

For a given $k$, the clustering problem amounts to
finding the best partitioning of the data by minimizing $W$.
In the current form of problem \eqref{eq:minimize}
the relationship
with other clustering methods or kernel spaces is totally obscure.
In the following we demonstrate what is the explicit
optimization problem behind
\eqref{eq:minimize} in the corresponding RKHS,
which establishes the connection with kernel methods.

Based on the relationship between kernels and semimetrics of negative
type,
assume that the kernel $\kk: \mathcal{X} \times \mathcal{X} \to \mathbb{R}$
generates $\rho$.  Define  the Gram matrix
\begin{equation}
\label{eq:kernel_matrix}
G \equiv \begin{pmatrix}
\kk(x_1,x_1)     & \dotsm     & \kk(x_1,x_n) \\
\vdots           & \ddots     & \vdots \\
\kk(x_n,x_1)     & \dotsm     & \kk(x_n,x_n)
\end{pmatrix} .
\end{equation}
Let $Z \in \{ 0,1 \}^{n\times k}$ be the label matrix,
with only one nonvanishing entry per row,
indicating to which cluster (column)
each point (row) belongs to. This matrix satisfies
$Z^\top Z = D$, where the diagonal matrix
$D = \diag( n_1,\dotsc, n_k )$  contains
the number of points in each cluster. We also introduce the rescaled
matrix  $Y$ below. In component form they are
\begin{equation}
\label{eq:label_matrix}
Z_{ij} \equiv \begin{cases}
1 & \mbox{if $x_i \in \C_j$ } \\
0 & \mbox{otherwise}
\end{cases}, \qquad
\Zt_{ij} \equiv \begin{cases}
\tfrac{1}{\sqrt{s_j}} & \mbox{if $x_i \in \C_j$ } \\
0 & \mbox{otherwise}
\end{cases} .
\end{equation}
Throughout the paper, we use the notation $M_{i\bullet}$ to denote
the $i$th row of a matrix $M$, and $M_{\bullet j}$ denotes its $j$th column.
We also define the following:
\begin{equation}
\label{eq:weighted_matrices}
\mathcal{W} \equiv \diag(w_1,\dotsc,w_n), \quad
H \equiv \mathcal{W}^{1/2} Y, \quad
\om \equiv \mathcal{W} \e,
\end{equation}
where $w_i = w(x_i)$ is the weight associated to point $x_i$,
and $\e =(1,\dotsc,1)^\top \in \mathbb{R}^n$
is the all-ones vector.

Our next result shows that the optimization problem \eqref{eq:minimize}
is NP-hard since it is a quadratically constrained quadratic program
(QCQP) in the associated RKHS.

\begin{theorem}
\label{th:qcqp3}
The optimization
problem \eqref{eq:minimize} is equivalent to
\begin{equation}
\label{eq:qcqp3}
\begin{split}
&\max_H  \Tr \left[ H^\top \big(\mathcal{W}^{1/2} \, G \,
\mathcal{W}^{1/2}\big) H  \right] \\
&\mbox{such that $H \ge 0$, $H^\top H = I$, $H H^\top \om = \om$,}
\end{split}
\end{equation}
where $G$ is the Gram matrix \eqref{eq:kernel_matrix}
and the other quantities are defined in \eqref{eq:weighted_matrices}.
\end{theorem}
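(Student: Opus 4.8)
The plan is to turn the combinatorial objective $W$ into the trace form appearing in \eqref{eq:qcqp3}, and then to show that the three constraints cut out exactly the matrices $H=\mathcal{W}^{1/2}Y$ coming from admissible partitions, so that the minimizers of $W$ in \eqref{eq:minimize} are precisely the maximizers in \eqref{eq:qcqp3}. First I would expand $W$. Substituting \eqref{eq:g_def2} into \eqref{eq:within2} and cancelling one power of $s_j$ gives
\[ W=\sum_{j=1}^k \frac{1}{2s_j}\sum_{x,y\in\C_j} w(x)w(y)\,\rho(x,y). \]
Then I would insert the kernel identity \eqref{eq:gen_kernel}, $\rho(x,y)=\kk(x,x)+\kk(y,y)-2\kk(x,y)$. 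In each of the two diagonal contributions the free summation variable can be summed first, producing a factor $s_j$ that cancels the $1/s_j$; summing over $j$ and using that the $\C_j$ partition $\mathbb{X}$ collapses these terms into the partition-independent constant $\sum_{x\in\mathbb{X}}w(x)\kk(x,x)=\Tr(\mathcal{W}G)$. What remains is
\[ W=\Tr(\mathcal{W}G)-\sum_{j=1}^k\frac{1}{s_j}\sum_{x,y\in\C_j}w(x)w(y)\,\kk(x,y), \]
so minimizing $W$ is the same as maximizing the double sum.

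Next I would identify that double sum with the objective of \eqref{eq:qcqp3}. With $H=\mathcal{W}^{1/2}Y$ the entries are $H_{ij}=\sqrt{w_i}\,\Ind{x_i\in\C_j}/\sqrt{s_j}$, and a direct index computation, using $(\mathcal{W}^{1/2}G\,\mathcal{W}^{1/2})_{pq}=\sqrt{w_p w_q}\,\kk(x_p,x_q)$, gives
\[ \big(H^\top \mathcal{W}^{1/2}G\,\mathcal{W}^{1/2}H\big)_{jj}=\frac{1}{s_j}\sum_{x,y\in\C_j}w(x)w(y)\,\kk(x,y). \]
Taking the trace over $j$ reproduces exactly the sum above, so $W=\Tr(\mathcal{W}G)-\Tr[H^\top(\mathcal{W}^{1/2}G\,\mathcal{W}^{1/2})H]$. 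Since the first term is constant over partitions, \eqref{eq:minimize} becomes the maximization of $\Tr[H^\top(\mathcal{W}^{1/2}G\,\mathcal{W}^{1/2})H]$ appearing in \eqref{eq:qcqp3}.

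It remains to check that the feasible set $\{H\ge 0,\ H^\top H=I,\ HH^\top\om=\om\}$ is in bijection with the partitions of $\mathbb{X}$ into $k$ nonempty blocks. The forward direction is routine: nonnegativity is immediate, the disjointness of the $\C_j$ makes the columns of $H$ orthogonal while the $1/\sqrt{s_j}$ normalization in $Y$ makes them unit, giving $H^\top H=I$, and substituting the explicit entries verifies $HH^\top\om=\om$. I expect the converse to be the main obstacle: I must rule out spurious feasible $H$ that do not arise from a partition. Here I would argue as follows. Because $H^\top H=I$, the matrix $P=HH^\top$ is the orthogonal projector onto $\range(H)=\vspan\{H_{\bullet 1},\dots,H_{\bullet k}\}$, so the constraint $HH^\top\om=\om$ is equivalent to $\om\in\range(H)$. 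Nonnegativity together with orthonormality forces the columns to have pairwise disjoint supports, since a vanishing inner product $\sum_i H_{ij}H_{il}=0$ of two entrywise-nonnegative vectors forces every term to vanish; hence each row of $H$ has at most one nonzero entry, and each column, being unit-norm, is nonzero.

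Writing the entrywise-positive vector $\om$ in the disjointly supported orthonormal basis $\{H_{\bullet j}\}$ then forces every coordinate to lie in some support (otherwise that coordinate of $\om$ would vanish), so every point is assigned to exactly one block; matching coordinates with the unit-norm condition pins each nonzero entry to $\sqrt{w_i}/\sqrt{s_j}$, recovering $H=\mathcal{W}^{1/2}Y$ for a genuine partition. This establishes the bijection and hence the equivalence. NP-hardness is then inherited from the fact that \eqref{eq:qcqp3} is a nonconvex QCQP containing kernel k-means as the special case $w\equiv 1$.
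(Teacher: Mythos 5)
Your proposal is correct, and on the computational core it coincides with the paper's proof: the paper likewise expands $W$ via \eqref{eq:gen_kernel} into a partition-independent diagonal term plus $-\sum_{j} s_j^{-1}\sum_{x,y\in\C_j}w(x)w(y)\kk(x,y)$ (its Eq.~\eqref{eq:W2}), rewrites the latter as $\Tr\big[H^\top \W^{1/2} G\, \W^{1/2} H\big]$ by the same index computation, and then checks $H\ge 0$, $H^\top H=I$, and the row-sum identity for $HH^\top$. Where you genuinely go beyond the paper is the converse direction: the paper only verifies that every partition matrix $H=\W^{1/2}\Zt$ is feasible with matching objective, leaving open whether the QCQP admits spurious feasible points. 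Your argument --- nonnegativity plus $H^\top H=I$ forces pairwise disjoint column supports; $HH^\top\om=\om$ means $\om\in\range(H)$ since $HH^\top$ is the orthogonal projector; entrywise positivity of $\om$ then forces every row to carry exactly one nonzero entry, whose value is pinned by the unit-norm condition --- closes this gap, and it is in fact needed for the claimed \emph{equivalence} (rather than a one-way reduction), so your proof is the more complete one. One caveat you inherited blindly: what the computation actually establishes, both in the paper and in your forward check, is $HH^\top \W^{1/2}\e = \W^{1/2}\e$, which is inconsistent with the definition $\om\equiv\W\e$ in \eqref{eq:weighted_matrices} whenever the weights are non-constant. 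Under the literal reading $\om=\W\e$, partition matrices are generally \emph{infeasible}, and your pinning step would yield $H_{ij}=w_i\big(\sum_{i'\in\C_j}w_{i'}^2\big)^{-1/2}$ rather than $\sqrt{w_i}/\sqrt{s_j}$; everything you wrote is correct once $\om$ is read as $\W^{1/2}\e$ (which is what the paper's own derivation produces, and which collapses to the same vector when $w\equiv 1$), and it would be worth stating this explicitly. Your closing NP-hardness remark is sound as a reduction sketch (hardness of the $w\equiv 1$ special case implies hardness in general) but is extraneous to the theorem as stated.
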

\begin{proof}
From
\eqref{eq:gen_kernel},
\eqref{eq:g_def2}, and
\eqref{eq:within2}
we have
\begin{equation}
\label{eq:W2}
\begin{split}
\!\! \!\!     W
&= \sum_{j=1}^k \dfrac{1}{2 s_j}
\!\sum_{x,y \in \C_j} w(x)w(y)\rho(x,y) \\
&= \! \sum_{j=1}^k   \sum_{x \in \C_j} \!\! \bigg[
w(x)\kk(x,x) \!  - \! \dfrac{1}{s_j} \! \sum_{y \in \C_j} \!w(x)w(y) \kk(x,y) \bigg].
\end{split}
\end{equation}
Note that the first term is global so it does not contribute to the
optimization problem.
Therefore, problem \eqref{eq:minimize} becomes
\begin{equation}
\label{eq:max_prob}
\max_{ \C_1,\dotsc,\C_k }
\sum_{j=1}^k \dfrac{1}{s_j} \sum_{x,y\in C_j} w(x) w(y) \kk(x,y) .
\end{equation}
Using the definitions
\eqref{eq:label_matrix} and
\eqref{eq:weighted_matrices}, the previous
objective function can be written as
\begin{equation}
\begin{split}
& \hspace{-2em} \sum_{j=1}^k \dfrac{1}{s_j}
\sum_{p=1}^n \sum_{q=1}^n
w_p w_q Z_{pj} Z_{qj} G_{pq}  \\
&=
\sum_{j=1}^k
\sum_{p=1}^n \sum_{q=1}^n
\dfrac{Z^\top_{jp}\sqrt{w_p}}{\sqrt{s_j}} w_p^{1/2} G_{pq} w_q^{1/2}
\dfrac{\sqrt{w_q} Z_{qj}}{\sqrt{s_j}} \\
&=
\sum_{j=1}^k \left(H^\top \mathcal{W}^{1/2} G \mathcal{W}^{1/2} H\right)_{jj}
\\
&= \Tr\left[ H^\top \mathcal{W}^{1/2} \, G \, \mathcal{W}^{1/2} H  \right].
\end{split}
\end{equation}

Now it remains to obtain the constraints.
Note that $H_{ij} \ge 0$ by definition, and
\begin{equation}
\begin{split}
(H^\top H)_{ij} &= \sum_{\ell=1}^n
Y_{\ell i} \mathcal{W}_{\ell \ell} Y_{\ell j } \\
&=
\dfrac{1}{\sqrt{s_i}\sqrt{s_j}} \sum_{\ell=1}^n w_\ell Z_{\ell i} Z_{\ell j}
\\
&= \dfrac{\delta_{ij}}{s_i} \sum_{\ell=1}^n w_\ell Z_{\ell i}
\\
&= \delta_{ij}
\end{split}
\end{equation}
where $\delta_{ij}=1$ if $i=j$ and $\delta_{ij}=0$ if $i\ne j$ is the
Kronecker delta. Therefore, $H^\top H = I$.
This is a constraint on the rows of $H$.
To obtain a constraint on its columns,
observe that
\begin{equation}
\begin{split}
\left(H^\top H\right)_{pq} &= \sqrt{w_p w_q}\sum_{j=1}^k \dfrac{Z_{pj}
Z_{qj}}{s_j} \\
& = \begin{cases}
\dfrac{\sqrt{w_p w_q}}{s_i} & \mbox{if both $x_p,x_q \in \C_i$,} \\
0 & \mbox{otherwise}.
\end{cases}
\end{split}
\end{equation}
Therefore, $(H^\top H \mathcal{W}^{1/2})_{pq} = \sqrt{w_p} \, w_q s_i^{-1}$
if both points $x_p$ and $x_q$ belong to the same cluster, which
we denote by $\C_i$ for some $i\in\{1,\dotsc,k\}$, and
$(H^\top H \mathcal{W}^{1/2})_{pq} = 0 $ otherwise. Thus, the $p$th
line of this matrix is nonzero only on entries corresponding to points
that are in the same cluster as $x_p$. If we sum over the columns of this
line we obtain $\sqrt{w_p} s_i^{-1} \sum_{q=1}^n w_q Z_{qi} = \sqrt{w_p}$,
or equivalently
\begin{equation}
H H^\top \mathcal{W}^{1/2} \e = \mathcal{W}^{1/2} \e.
\end{equation}
From \eqref{eq:weighted_matrices} this
gives $H H^\top \om = \om$, finishing the proof.
\end{proof}

The optimization problem \eqref{eq:qcqp3} is nonconvex, besides
being NP-hard, thus a direct approach
is computationally prohibitive even for small datasets.
However, one can find approximate solutions by relaxing some
of the constraints. For instance, consider the relaxed problem
\begin{equation}
\label{eq:relaxed}
\max_{H} \Tr \left[ H^\top \widetilde{G} \, H \right]
\qquad \mbox{such that $H^\top H = I$},
\end{equation}
where $\widetilde{G} \equiv \mathcal{W}^{1/2} \, G \, \mathcal{W}^{1/2}$.
This problem has a well-known closed form solution $H^\star = U R$, where the
columns of $U \in \mathbb{R}^{n\times k}$
contain the top $k$ eigenvectors of $\widetilde{G}$ corresponding
to the $k$ largest eigenvalues, $\lambda_1\ge \lambda_2\ge\dotsc\ge\lambda_k$,
and $R \in \mathbb{R}^{k\times k}$ is an arbitrary orthogonal matrix.
The resulting optimal objective function assumes the value
$\max \Tr \big[ {H^\star}^\top \widetilde{G} \, H^\star \big]  =
\sum_{i=1}^k \lambda_i$.
Spectral clustering is based on this approach, where
one further normalize the rows of $H^\star$, then cluster
the resulting rows as data points using any clustering method such
as k-means.
A procedure on these lines was proposed in the seminal papers
\cite{Malik,NgJordan}.
Spectral clustering is a powerful procedure which makes few assumptions
about the shape of clusters, however it is sensitive to noisy dimensions in the data.
A modification of the optimization problem \eqref{eq:relaxed} that incorporates dimensionality
reduction in the form of constraints to jointly learn the relevant dimensions was
proposed in \cite{Niu:2011}.

\subsection{Connection with graph partitioning}

We now show how
graph partitioning problems are related to the energy statistics
formulation leading to problem \eqref{eq:qcqp3}.

Consider a graph $\mathcal{G} = (\mathcal{V}, \mathcal{E}, \mathcal{A})$,
where $\mathcal{V}$ is the set of vertices, $\mathcal{E}$ the set of edges,
and $\mathcal{A}$ is an affinity matrix
which measures the
similarities between pairs of nodes. Thus, $\mathcal{A}_{ij} \ne 0$
if $(i,j) \in \mathcal{E}$, and $\mathcal{A}_{ij} = 0$ otherwise.
We also associate weights to every vertex,
$w_i = w(i)$ for $i \in \mathcal{V}$, and let $s_j = \sum_{ i \in \C_j} w_i$,
where $\C_j \subseteq \mathcal{V}$ is one partition of $\mathcal{V}$.
Let
\begin{equation}
\textnormal{links}(\C_\ell, \C_m) \equiv
\sum_{
i\in \C_\ell} \sum_{ j\in \C_m} 
\mathcal{A}_{ij} .
\end{equation}
Our goal is to partition the set of vertices $\mathcal{V}$ into $k$ disjoint
subsets, $\mathcal{V} = \bigcup_{j=1}^k \C_j $.
The generalized ratio association problem is given by
\begin{equation}
\label{eq:assoc}
\max_{\C_i,\dots,\C_k} \sum_{j=1}^k \dfrac{\textnormal{links}(\C_j,\C_j)}{s_j}
\end{equation}
and maximizes the within cluster association.
The generalized ratio cut problem
\begin{equation}
\label{eq:cut}
\min_{\C_i,\dots,\C_k} \sum_{j=1}^k
\dfrac{\textnormal{links}(\C_j,\mathcal{V} \setminus \C_j)}{s_j}
\end{equation}
minimizes the cut between clusters. Both problems \eqref{eq:assoc}  and
\eqref{eq:cut} are equivalent,
in analogous way as minimizing \eqref{eq:within2} is equivalent to
maximizing \eqref{eq:between2}, as shown in Lemma~\ref{th:minimize}.
Here this equivalence is  a consequence of the equality
$\textnormal{links}(\C_j,\mathcal{V} \setminus  \C_j)=
\textnormal{links}(\C_j,\mathcal{V}) - \textnormal{links}(\C_j,\C_j)$.
Several graph partitioning methods
\cite{Kernighan,Malik,Chan,Yu}
can be seen as a particular case of problems
\eqref{eq:assoc} or \eqref{eq:cut}.

Let us consider the ratio association problem \eqref{eq:assoc},
whose objective function can be written as
\begin{equation}
\begin{split}
\sum_{j=1}^k \dfrac{1}{s_j} \sum_{p \in \C_j} \sum_{q \in \C_j}
\mathcal{A}_{pq} &= \sum_{j=1}^k \sum_{p=1}^n \sum_{q=1}^n
\dfrac{Z^\top_{jp}}{\sqrt{s_j}} \, \mathcal{A}_{pq} \,
\dfrac{Z_{qj}}{\sqrt{s_j}} \\
&= \Tr\big[ Y^\top \mathcal{A} \, Y \big] ,
\end{split}
\end{equation}
where we recall that $Z$ and $Y$ are defined in \eqref{eq:label_matrix}.
Therefore, the ratio association problem
can be written in the form \eqref{eq:qcqp3}, 
\begin{equation}
\begin{split}
&\max_H \Tr \left[
H^\top \mathcal{W}^{-1/2} \mathcal{A} \, \mathcal{W}^{-1/2} H
\right] \\ &\mbox{such that $H\ge 0$, $H^\top H = I$, $H H^\top
\om=\om$}.
\end{split}
\end{equation}
This is exactly the same as \eqref{eq:qcqp3}
with  $G = \mathcal{W}^{-1} \mathcal{A} \mathcal{W}^{-1}$. Assuming that this
matrix is positive semidefinite, this generates a semimetric
\eqref{eq:gen_kernel} for graphs given by
\begin{equation}
\label{eq:metric_graphs}
\rho(i,j) =
\dfrac{\mathcal{A}_{ii}}{w_i^{2}}
+\dfrac{\mathcal{A}_{jj}}{w_j^{2}}
-\dfrac{2 \mathcal{A}_{ij}}{w_i w_j}
\end{equation}
for vertices $i,j \in \mathcal{V}$. If
we assume the graph has no self-loops we must replace
$\mathcal{A}_{ii} = 0$ above.
The weight of node $i\in \mathcal{V}$ can be, for instance,
its degree $w_i = w(i) = d(i)$.

\subsection{Connection with kernel k-means}
\label{sec:kernel_kmeans}

We now show that kernel k-means optimization problem \cite{Dhillon2,Dhillon}
is also related
to the previous energy statistics formulation to clustering.

For a positive semidefinite Gram matrix $G$, as defined in
\eqref{eq:kernel_matrix},
there exists a map
$\varphi: \mathcal{X} \to \HH_\kk$ such that
\begin{equation}
\kk(x,y) = \langle \varphi(x), \varphi(y) \rangle.
\end{equation}
Define the weighted mean of cluster $\C_j$ as
\begin{equation}
\label{eq:muj}
\mu_j = \dfrac{1}{s_j} \sum_{x \in \C_j} w(x) x.
\end{equation}
Disregarding the first global term in \eqref{eq:W2}, note
that the second term,
$-\tfrac{1}{s_j} \sum_{x,y \in \C_j} w(x) w(y) \kk(x,y)$,
is equal to
\begin{multline}
\dfrac{1}{s_j^2}\sum_{x,y,z\in\C_j}
\langle w(y) \varphi(y),  w(z) \varphi(z) \rangle \\
-\dfrac{2}{s_j}\sum_{x,y\in\C_j} \langle w(x) \varphi(x),
w(y) \varphi(y) \rangle ,
\end{multline}
which using
\eqref{eq:muj}
becomes
\begin{multline}
\sum_{x\in\C_j}
\big\{ \langle \varphi(\mu_j), \varphi(\mu_j) \rangle -
2 \langle w(x)\varphi(x),
\varphi(\mu_j) \rangle \big\} \\ =
\sum_{x\in\C_j}\big\{ \| w(x) \varphi(x) - \varphi(\mu_j) \|^2 -
\|w(x)\varphi(x)\|^2\big\}.
\end{multline}
Therefore, minimizing $W$ in \eqref{eq:W2} is equivalent to
\begin{equation}
\label{eq:kernel_kmeans}
\min_{\C_1,\dotsc,\C_k}\bigg\{
J(\{\C_j\}) \equiv  \sum_{j=1}^k
\sum_{x \in \C_j} \| w(x) \varphi(x) - \varphi(\mu_j) \|^2
\bigg\} .
\end{equation}
Problem \eqref{eq:kernel_kmeans} is obviously  equivalent to
problem \eqref{eq:qcqp3}.
When $w(x) = 1$ for all
$x$, \eqref{eq:kernel_kmeans} corresponds to
kernel k-means optimization problem
\cite{Dhillon2,Dhillon}. Thus, the result \eqref{eq:kernel_kmeans} shows
that the previous energy statistics formulation to clustering
is equivalent to a weighted
version of kernel k-means\footnote{
One should not confuse kernel k-means \emph{optimization problem},
given by \eqref{eq:kernel_kmeans}, with kernel k-means \emph{algorithm}.
We will discuss two approaches to solve \eqref{eq:kernel_kmeans},
or equivalently \eqref{eq:qcqp3}. One
based on Lloyd's heuristic, which leads to kernel k-means algorithm, and the other based on Hartigan's method,
which leads to a new algorithm (kernel k-groups).
}. One must note, however, that energy statistics fixes the kernel
through \eqref{eq:kernel_semimetric}.

\section{Iterative Algorithms}
\label{sec:algo}


We now introduce two iterative algorithms to solve
the optimization problem \eqref{eq:qcqp3}. The first is based on Lloyd's
method, while the second is based on Hartigan's method.

Consider the optimization problem
\eqref{eq:max_prob}
written as
\begin{equation}
\label{eq:maxQ}
\! \max_{\{ \C_1,\dotsc,\C_k \}}  \!
\bigg\{ Q =  \sum_{j=1}^k \dfrac{Q_j}{s_j}  \bigg\},
\ Q_j \equiv \!\!\!\sum_{x,y\in\C_j} \!\!\! w(x) w(y)\kk(x,y),
\end{equation}
where $Q_j$ represents an internal cost of cluster $\C_j$, and
$Q$ is the total cost where each $Q_j$
is weighted by the inverse
of the sum of weights of the points in $\C_j$.
For a data point $x_i$, we denote
its cost
with cluster $\C_\ell$ by
\begin{equation}
\label{eq:costxij}
Q_\ell(x_i) \equiv \sum_{y\in\C_\ell} w(x_i) w(y) \kk(x_i, y) =
(\mathcal{W} G \mathcal{W})_{i \bullet} \cdot Z_{\bullet \ell},
\end{equation}
where we recall that $M_{i\bullet}$ ($M_{\bullet i}$) denotes
the $i$th row (column) of matrix $M$.

\subsection{Weighted kernel k-means algorithm}

Using the definitions \eqref{eq:maxQ} and \eqref{eq:costxij},
the optimization problem \eqref{eq:kernel_kmeans} can be written as
\begin{equation}
\label{eq:min_lloyd}
\min_{Z} \sum_{i = 1}^{n} \sum_{\ell=1}^{k} Z_{i\ell} J^{(\ell)}(x_i)
\end{equation}
where
\begin{equation}
\label{eq:Jell}
J^{(\ell)}(x_i) \equiv
\dfrac{1}{s_\ell^2} Q_\ell
-\dfrac{2}{s_\ell} Q_\ell(x_i) .
\end{equation}
A possible strategy to solve \eqref{eq:min_lloyd} is to
assign  $x_i$ to cluster $\C_{j^\star}$ according
to
\begin{equation}
\label{eq:min_lloyd2}
j^\star = \argmin_{\ell=1,\dotsc,k} J^{(\ell)}(x_i) .
\end{equation}
This should be done for every data point $x_i$ and repeated until
convergence, i.e., until no new assignments are made.
The entire procedure is described in Algorithm~\ref{kmeans_algo}.
It can be shown that this algorithm converges when $G$ is positive
semidefinite.

\begin{algorithm}[t]
\begin{algorithmic}[1]
    \INPUT $k$, $G$, $\mathcal{W}$, $Z \leftarrow Z_0$
    \OUTPUT $Z$
  \STATE $q \leftarrow (Q_1, \dotsc, Q_k)^\top$
            \hfill (see \eqref{eq:maxQ})
  \STATE $s \leftarrow (s_1,\dotsc,s_k)^\top$
  \REPEAT
    \FOR{ $i=1,\dotsc,n$}
        \STATE let $j$ be such that $x_i \in \C_j$
        \STATE $j^\star \leftarrow \argmin_{\ell=1,\dotsc,k} J^{(\ell)}(x_i)$
            \hfill (see \eqref{eq:Jell})
        \IF{ $j^\star \ne j$}
            \STATE $Z_{ij} \leftarrow 0$
            \STATE $Z_{ij^\star} \leftarrow 1$
            \STATE $s_j \leftarrow s_j - \mathcal{W}_{ii}$
            \STATE $s_{j^\star} \leftarrow s_{j^\star} + \mathcal{W}_{ii}$
            \STATE $q_j \leftarrow q_j - 2Q_j(x_i)$
            \hfill (see \eqref{eq:costxij})
            \STATE $q_{j^\star} \leftarrow q_{j^\star} + 2Q_{j^\star}(x_i)$
            \hfill (see \eqref{eq:costxij})
        \ENDIF
    \ENDFOR
  \UNTIL{convergence}
\end{algorithmic}
\caption{\label{kmeans_algo}
Weighted version of kernel k-means algorithm
to find local solutions to the optimization problem
\eqref{eq:qcqp3}.
}
\end{algorithm}

To see that the above procedure is indeed kernel k-means
\cite{Dhillon2,Dhillon}, based
on Lloyd's heuristic \cite{Lloyd}, note that from \eqref{eq:kernel_kmeans}
and \eqref{eq:Jell} we have
\begin{equation}
\label{eq:closest_mean}
\min_\ell J^{(\ell)}(x_i) =
\min_\ell \| w(x_i) \varphi(x_i) - \varphi(\mu_\ell)\|^2.
\end{equation}
Therefore, we are
assigning $x_i$ to the cluster with closest center (in the
feature space).
When $w(x) = 1$ for all $x$, the above method  is the standard
kernel k-means algorithm.

To check the complexity of Algorithm~\ref{kmeans_algo},
note that the second term 
in \eqref{eq:Jell} requires
$\OO(n_\ell)$ operations, and although the first term requires
$\OO(n_\ell^2)$ it only needs to be computed once outside the loop through
data points (step 1).
Thus, the time complexity of Algorithm~\ref{kmeans_algo}
is $\OO(n k \max_\ell n_\ell) = \OO(k n^2)$. For a sparse
Gram matrix $G$, having
$\tilde{n}$ nonzero elements, this can be further reduced
to $\OO(k \tilde{n})$.

\subsection{Kernel k-groups algorithm}

\begin{algorithm}[t]
\begin{algorithmic}[1]
    \INPUT $k$, $G$, $\W$, $Z \leftarrow Z_0$
    \OUTPUT $Z$
  \STATE $q \leftarrow (Q_1, \dotsc, Q_k)^\top$
            \hfill (see \eqref{eq:maxQ})
  \STATE $s \leftarrow (s_1,\dotsc,s_k)^\top$
  \REPEAT
    \FOR{ $i=1,\dotsc,n$}
        \STATE let $j$ be such that $x_i \in \C_j$
        \STATE $j^\star \leftarrow \argmax_{\ell=1,\dotsc,k \, | \, \ell\ne j}
                \Delta Q^{j\to \ell}(x_i)$
            \hfill (see \eqref{eq:changeQ}) \label{stepmove}
        \IF{ $\Delta Q^{j \to j^\star}(x_i) > 0$ }
            \STATE $Z_{ij} \leftarrow 0$
            \STATE $Z_{ij^\star} \leftarrow 1$
            \STATE $s_j \leftarrow s_j - \mathcal{W}_{ii}$
            \STATE $s_{j^\star} \leftarrow s_{j^\star} + \mathcal{W}_{ii}$
            \STATE $q_j \leftarrow q_j - 2Q_j(x_i) + (\W G \W)_{ii}$
                \hfill (see \eqref{eq:costxij})
            \STATE
            $q_{j^\star} \leftarrow q_{j^\star} + 2Q_{j^\star}(x_i)+
                (\W G\W)_{ii}$
                \hfill (see \eqref{eq:costxij})
        \ENDIF
    \ENDFOR
  \UNTIL{convergence}
\end{algorithmic}
\caption{\label{algo}
Kernel k-groups algorithm based on Hartigan's method to find
local solutions to problem \eqref{eq:qcqp3}.
}
\end{algorithm}

We now consider Hartigan's method \cite{Hartigan1975,Hartigan1979}
applied to the optimization problem in the form \eqref{eq:maxQ}, which gives
a local solution to \eqref{eq:qcqp3}.
The method is based in computing the maximum change
in the total cost function $Q$ when moving each data point to
another cluster. More specifically,
suppose that point $x_i$
is currently assigned to  cluster $\C_j$ yielding
a total cost function denoted by $Q^{(j)}$.
Moving $x_i$ to cluster $\C_\ell$ yields another total cost function
denoted by $Q^{(\ell)}$. We are interested in computing the maximum
change
$\Delta Q^{(j\to \ell)} (x_i) \equiv Q^{(\ell)} - Q^{(j)}$, for $\ell\ne j$.
From \eqref{eq:maxQ}, by explicitly writing the costs related to these
two clusters we obtain
\begin{equation}
\label{eq:deltaQ}
\Delta Q^{(j\to \ell)} (x_i) = \dfrac{Q_\ell^{+}}{s_\ell+w_i} +
\dfrac{Q_j^-}{s_j-w_i} - \dfrac{Q_\ell}{s_\ell} - \dfrac{Q_j}{s_j}  ,
\end{equation}
where $Q^{+}_\ell$ denote the cost of the new $\ell$th cluster
with the point $x_i$ added to it, and $Q^-_j$ is the cost of new
$j$th cluster with $x_i$ removed from it. Recall also that $w_i=w(x_i)$
is the weight associated to point $x_i$. Noting that
\begin{align}
Q_\ell^{+} &= Q_\ell + 2 Q_\ell(x_i) + (\W G \W)_{ii},
\label{eq:Qplus} \\
Q_j^{-} &= Q_j - 2 Q_j(x_i) + (\W G \W)_{ii},
\label{eq:Qminus}
\end{align}
we have that
\begin{multline}
\label{eq:changeQ}
\Delta Q^{(j \to \ell)}(x_i)  =
\dfrac{1}{s_j - w_i}  \left[ \dfrac{w_i}{s_j}Q_j
- 2 Q_j(x_i) + (\W G \W)_{ii} \right] \\
- \dfrac{1}{s_\ell + w_i}\left[ \dfrac{w_i}{s_\ell}Q_\ell - 2 Q_\ell(x_i)
- (\W G \W )_{ii} \right].
\end{multline}
Therefore, we compute
\begin{equation}
\label{eq:maxcost}
j^\star = \argmax_{\ell=1,\dotsc,k \, | \, \ell\ne j}
\Delta Q^{(j \to \ell)}(x_i)
\end{equation}
and if $\Delta Q^{j \to j^\star}(x_i) > 0$
we move $x_i$ to cluster $\C_{j^\star}$, otherwise
we keep $x_i$ in its original cluster $\C_j$.
This process is repeated
until no points are assigned to new clusters.
The entire procedure is described in Algorithm~\ref{algo}, which we
call kernel k-groups. This method is a generalization of k-groups
with first variations proposed in \cite{Kgroups}, which only considers
the Euclidean case. 
Next, we state two important properties of kernel k-groups.

\begin{theorem}\label{thm:convergence}
Kernel k-groups (Algorithm~\ref{algo}) converges in a finite number of steps.
\end{theorem}
\begin{proof}
This follows by construction since the rule \eqref{eq:maxcost} ensures that
the cost function $Q$ in \eqref{eq:maxQ} is
monotonically increasing at each iteration, and there are a finite number
of distinct cluster assignments.
\end{proof}

Note that contrary to kernel k-means which requires the Gram matrix to be positive
semidefinite, kernel k-groups converges without such an assumption.

\begin{theorem}\label{thm:complexity}
The complexity of kernel k-groups (Algorithm~\ref{algo}) is
$\OO(k n^2)$, where $k$ is the number of clusters and $n$ is the number of data points.
\end{theorem}
\begin{proof}
The computation of each cluster cost
$Q_j$
(see \eqref{eq:maxQ})
has complexity $\OO(n_j^2)$, and overall to compute $q$ (line 1 in Algorithm~\ref{algo})
we have $\OO(n_1^2+\dots + n_k^2) = \OO(k \max_j n_j^2)$.
These operations only need to be performed a single time. For
each point $x_i$ we need to compute $Q_j(x_i)$ once, which is
$\OO(n_j)$, and we need to compute $Q_\ell(x_i)$ for each $\ell\ne j$.
The cost of computing
$Q_\ell(x_i)$
is $\OO(n_\ell)$, thus the cost of step~$6$ in
Algorithm~\ref{algo} is $\OO(k \max_\ell n_\ell)$ for $\ell=1,\dotsc,k$.
Thus, for the
entire dataset we have
$\OO(n k  \max_\ell n_\ell) =\OO(k n^2)$.
\end{proof}

Note that the complexity of kernel k-groups is the same as
kernel k-means (Algorithm~\ref{kmeans_algo}).
Moreover, if $G$ is sparse
this can be further reduced to $\OO(k \tilde{n})$ where $\tilde{n}$
is the number of nonzero
entries of $G$.

\section{Numerical Experiments}
\label{sec:numerics}

\subsection{Synthetic experiments}
The main goal of this subsection is twofold.
First, to illustrate that in Euclidean spaces with the standard
metric of energy statistics, defined by the energy
distance \eqref{eq:energy}, kernel k-groups
is more flexible and in general more accurate than
k-means and GMM. Second, we want to compare kernel k-groups with kernel k-means
and spectral clustering when these methods
operate on the same kernel, i.e. when solving  the same optimization
problem.
We thus consider the metrics
\begin{align}
\rho_{\alpha}(x,y) &= \| x-y \|^{\alpha},
\label{eq:rho_alpha} \\
\widetilde{\rho}_{\sigma}(x,y) &= 2 - 2 e^{-\tfrac{\|x-y\|}{2 \sigma}},
\label{eq:rho_tilde}\\
\widehat{\rho}_{\sigma}(x,y) &= 2 - 2 e^{-\tfrac{\|x-y\|^2}{2 \sigma^2}},
\label{eq:rho_hat}
\end{align}
which define the
corresponding kernels through
\eqref{eq:kernel_semimetric}, and we always fix $x_0=0$.
We use $\rho_1$ by default, unless specified.
Here we consider the weights $\W = I$ in
Algorithms~\ref{kmeans_algo}~and~\ref{algo}, unless otherwise specified.
For k-means, GMM and spectral clustering we use the
implementations from \emph{scikit-learn} library \cite{scikit-learn}, where
k-means is initialized with
k-means++ \cite{Vassilvitskii}
and GMM with the output of k-means (this makes GMM
much more stable compared to a standard implementation with random or k-means++ initialization).
Kernel k-means is implemented as in Algorithm~\ref{kmeans_algo} and
kernel k-groups as in Algorithm~\ref{algo}.
Both are initialized with k-means++, unless specified otherwise.
In most cases, we run each algorithm $5$ times with different
initializations and pick the output with the best objective value.
For evaluation we use the
\begin{equation}
\label{eq:accuracy}
\textnormal{accuracy}(\hat{Z}) \equiv \max_\pi
\dfrac{1}{n}\sum_{i=1}^n\sum_{j=1}^k \hat{Z}_{i \pi(j)} Z_{ij} ,
\end{equation}
where $\hat{Z}$ is the predicted label matrix, $Z$
is the ground truth,
and $\pi$ is a permutation of $\{1,2,\dotsc,k\}$.
Thus, the accuracy corresponds to the fraction
of correctly classified data points, and it is always between $[0,1]$.
Additionally, we also consider the \emph{normalized mutual information} (NMI) which
is also between $[0,1]$. For graph clustering we will use
other metrics as well.

\begin{figure}[b]
\centering
\includegraphics[width=0.24\textwidth]{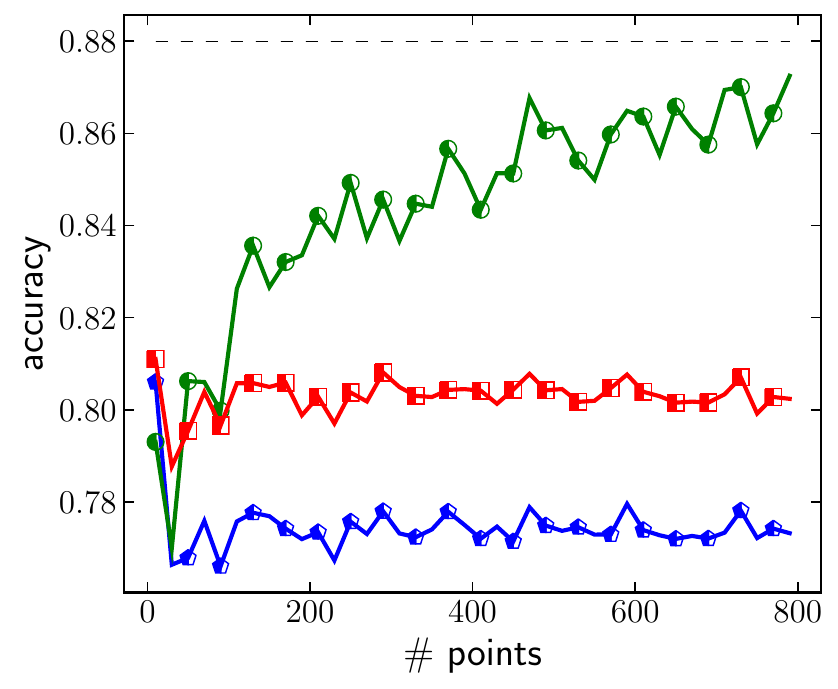}
\includegraphics[width=0.24\textwidth]{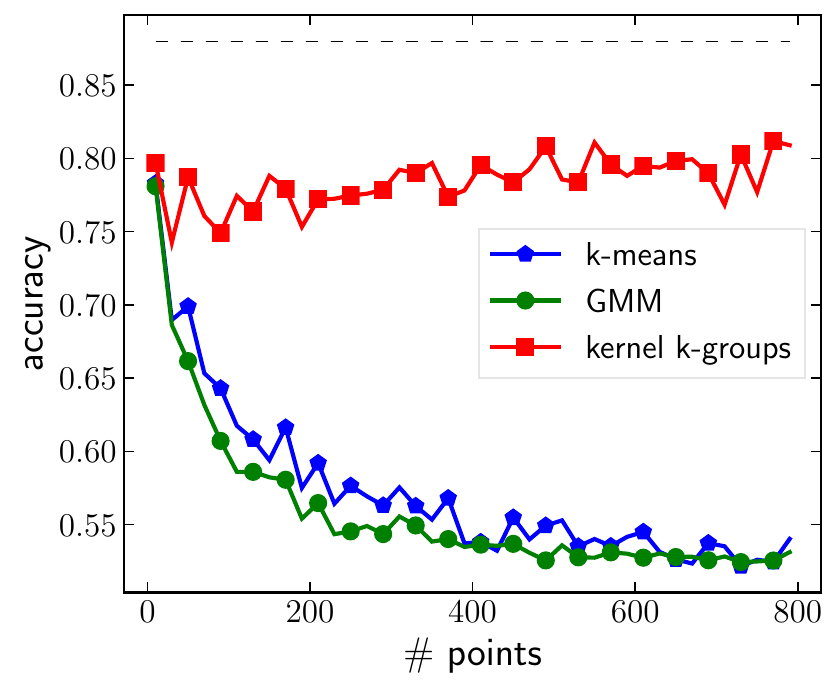}
\put(-245,3){\small{(a)}}
\put(-118,3){\small{(b)}}
\caption{
\label{fig:1d}
We report the mean accuracy (error bars are too small to be visible)
over 100 Monte Carlo runs.
(a)
$x \sim \tfrac{1}{2} \mathcal{N}(0, 1.5) + \tfrac{1}{2} \mathcal{N}(1.5,0.3)$;
(b)
$x \sim \tfrac{1}{2} e^{\mathcal{N}(0, 1.5)} + \tfrac{1}{2} e^{\mathcal{N}(1.5, 0.3)}$.
}
\end{figure}

\begin{figure}
\centering
\includegraphics[width=0.24\textwidth]{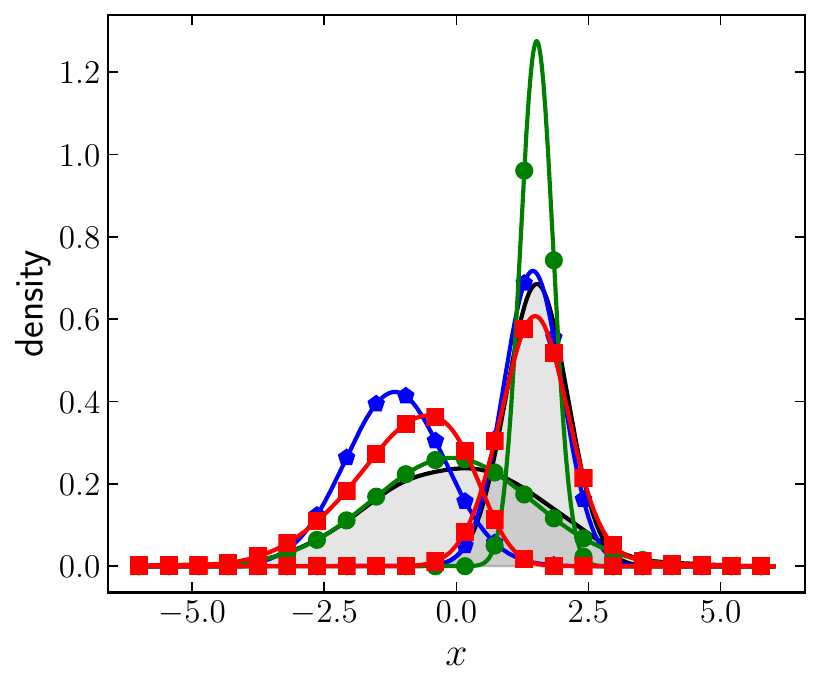}
\includegraphics[width=0.24\textwidth]{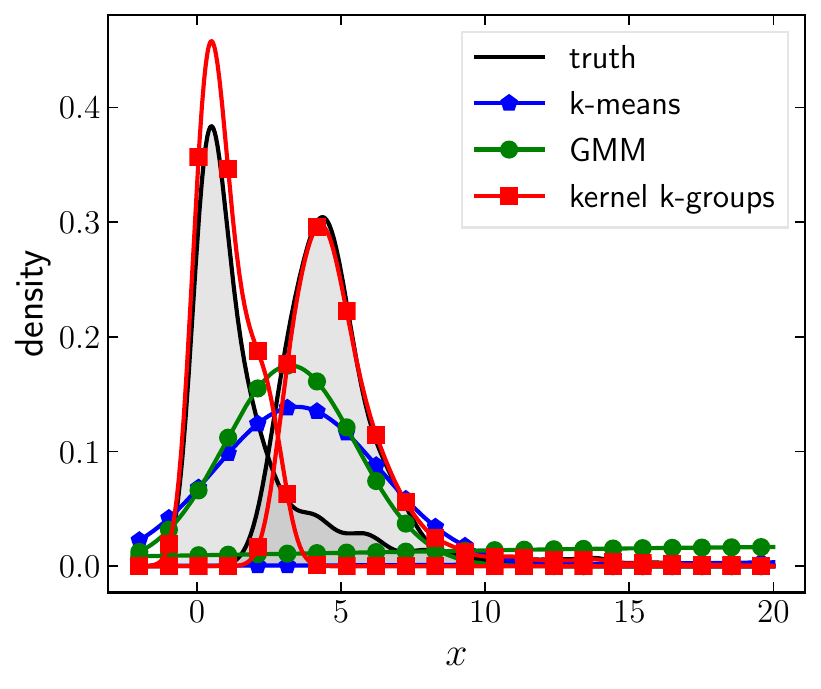}
\put(-245,3){\small{(a)}}
\put(-118,3){\small{(b)}}
\caption{
\label{fig:density}
We consider 2000 points sampled from the same respective
distributions of Fig.~\ref{fig:1d}. We perform a kernel density
estimation for kernel k-groups, 
while for k-means and GMM we show the estimated Gaussian fit.
The respective accuracies for k-means, GMM, and kernel k-groups
are as follows:
(a) $\{0.764, 0.870, 0.800\}$;
(b) $\{0.516, 0.533, 0.851\}$.
Note that in the latter case only kernel k-groups was able to distinguish the
two classes.
}
\end{figure}

\subsubsection{One-dimensional case}
We first consider one-dimensional data for a two-class problem
as shown in Fig.~\ref{fig:1d}. Note that in Fig.~\ref{fig:1d}a
GMM is more accurate, as expected, however in Fig.~\ref{fig:1d}b
kernel k-groups outperforms both methods for non Gaussian data.
GMM and k-means basically cluster at chance as $n$ increases.
This illustrates the model free character of energy statistics.
For these same distributions, in Fig.~\ref{fig:density} we show
a kernel density estimation and Gaussian fits.
In Fig.~\ref{fig:density}b only kernel k-groups was able to distinguish
between the two classes. The accuracy results are also indicated in
the caption. We remark that in one dimension it is possible to simplify the
formulas for energy statistics and obtain an exact deterministic
algorithm, i.e. without any initialization. This is Algorithm~\ref{algo1d} in the Appendix.
We stress that kernel k-groups given by Algorithm~\ref{algo}
performed the same as Algorithm~\ref{algo1d}.

\subsubsection{Multi-dimensional case}
Next, we analyze
how the algorithms degrade as dimension increases.
Consider the Gaussian mixture
\begin{equation}
\label{eq:gauss1}
\begin{split}
x  &\sim
\tfrac{1}{2} \mathcal{N}(\mu_1,\Sigma_1) +
\tfrac{1}{2} \mathcal{N}(\mu_2,\Sigma_2), \qquad
\Sigma_1=\Sigma_2 = I_D, \\
\mu_1 &= (\underbrace{0,\dotsc,0}_{\times D})^\top, \qquad
\mu_2 = 0.7 (\underbrace{1,\dots,1}_{\times 10},
\underbrace{0,\dots,0}_{\times (D-10)})^\top.
\end{split}
\end{equation}
We can compute the Bayes error which is fixed as $D$ increases, giving an optimal accuracy of $\approx 0.86$.
We sample $200$ points and do 100 Monte Carlo runs.
The results are in Fig.~\ref{fig:plotsa}.
Note that kernel k-groups and spectral clustering behave similarly,
being superior to the other methods. The improvement is noticeable in
higher dimensions.


\begin{figure*}
\centering
\subfloat[\label{fig:plotsa}]{\includegraphics[scale=.42]{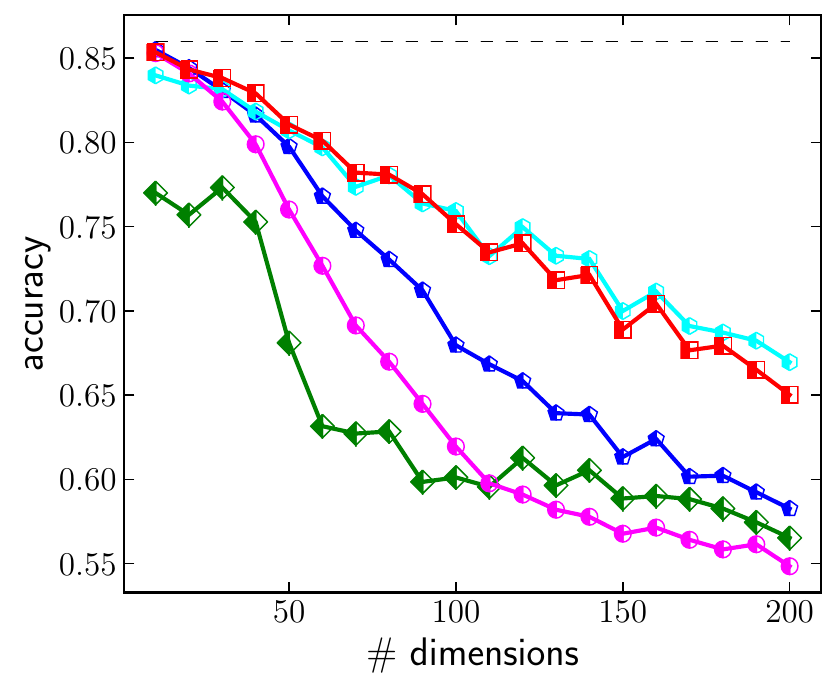}}
\subfloat[\label{fig:plotsb}]{\includegraphics[scale=.42]{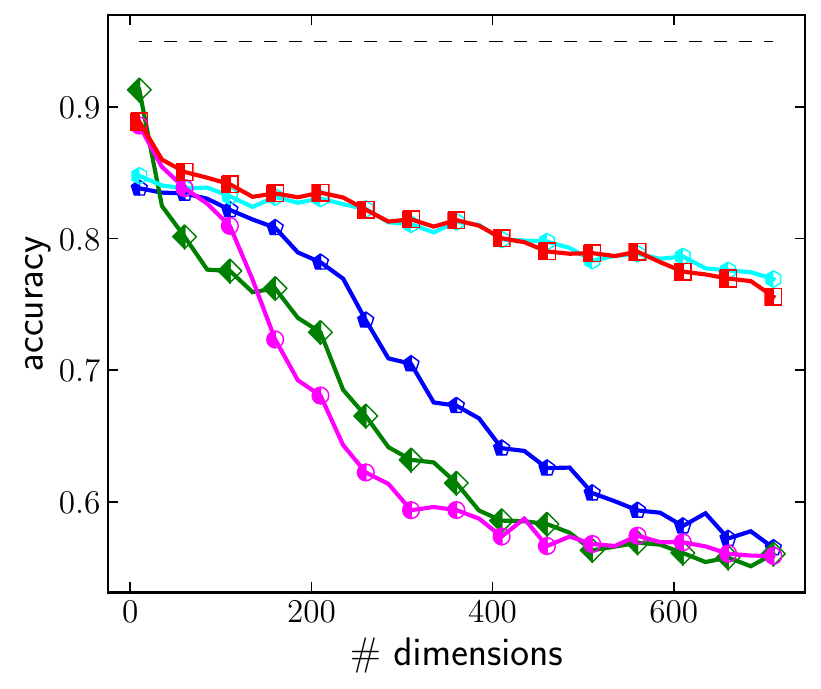}}
\subfloat[\label{fig:plotsc}]{\includegraphics[scale=.42]{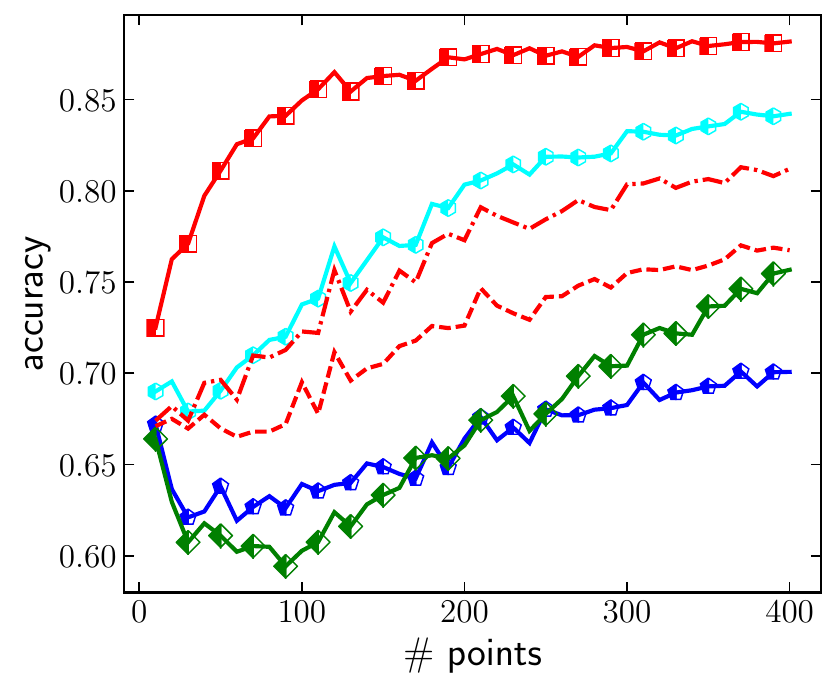}} \\
\subfloat[\label{fig:plotsd}]{\includegraphics[scale=.42]{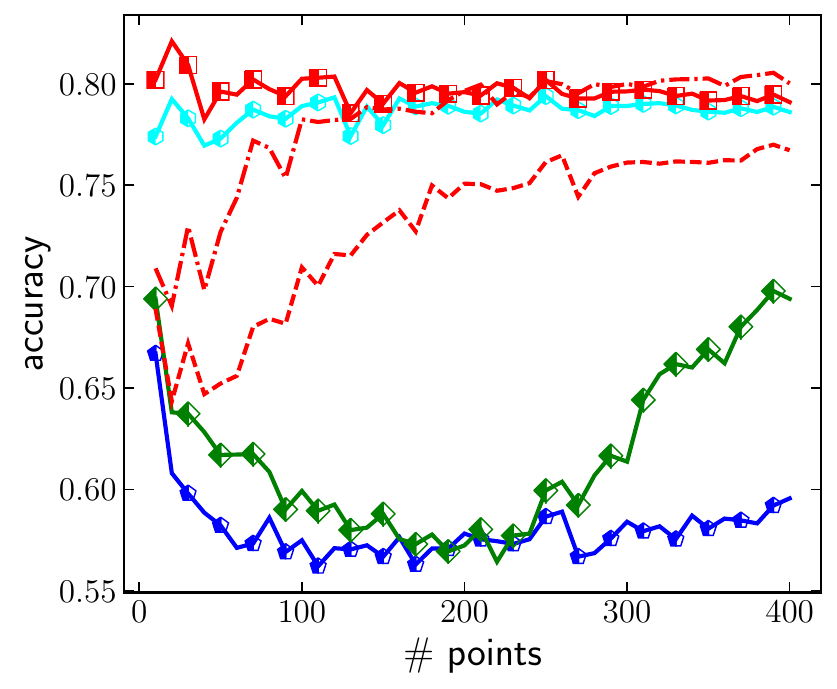}}
\subfloat[\label{fig:plotsf}]{\includegraphics[scale=.42]{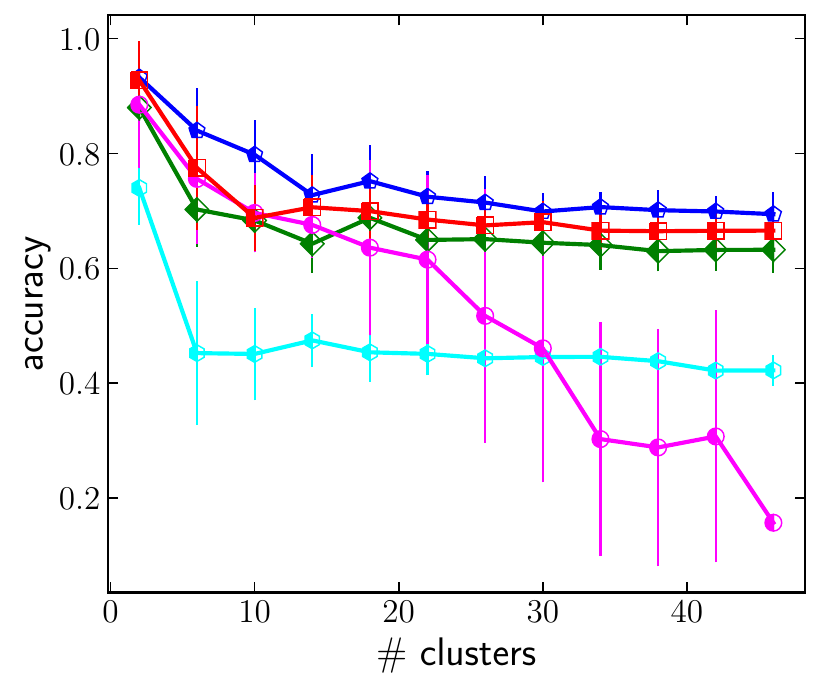}}
\subfloat[\label{fig:plotse}]{\includegraphics[scale=.42]{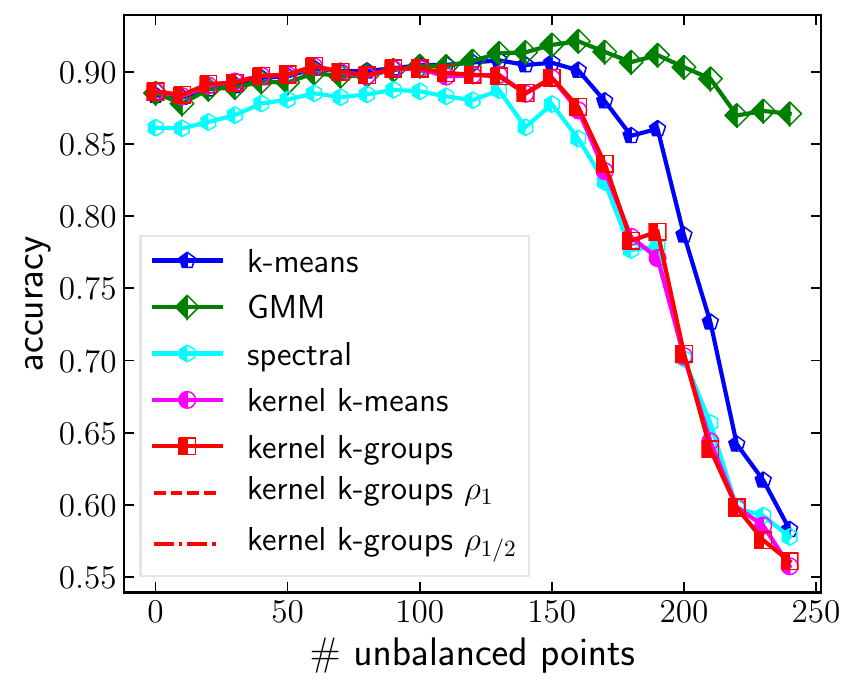}}
\caption{
\label{fig:plots}
Comparing kernel k-groups to other methods; see legend in plot (f).
For each experiment, except for (e), we perform 100 Monte Carlo runs and show the mean accuracy
(error bars are standard deviation but are too  small to be visible).
(a)
Higher dimensional Gaussian mixture \eqref{eq:gauss1}.
Dashed line is
Bayes accuracy $\approx 0.86$. We use the standard energy statistics metric $\rho_1$ in
\eqref{eq:rho_alpha}.
(b)
Higher dimensional Gaussian mixture \eqref{eq:gauss2}.
Bayes accuracy is $\approx 0.95$. We again use $\rho_1$.
(c)
Gaussian mixture with parameters \eqref{eq:20gauss}. We increase
the number of data points, and compare different
metrics (see \eqref{eq:rho_alpha}--\eqref{eq:rho_hat}).
The best results for kernel k-groups and
spectral clustering use $\widetilde{\rho}_1$.
(d)
Same as in (c) but for
lognormal mixture with \eqref{eq:20gauss}.
The plot suggests that neither of these methods are
consistent in this case since Bayes accuracy is $\approx 0.90$.
(e) Effect of large number of clusters on a two-dimensional grid.
All clusters have 10 points sampled from $\mathcal{N}(\mu, 0.1 I)$ where
$\mu$ assume coordinates on the grid (separated by one unit).
We increase the number of clusters and use the standard $\rho_1$.
(f)
Unbalanced clusters. The data is normally
distributed \eqref{eq:gauss3} where
we vary $m \in [0, 240]$. We use the standard metric $\rho_1$.
}
\end{figure*}

Still for the mixture \eqref{eq:gauss1}, we now choose
entries for the  diagonal covariance $\Sigma_2$.
We have $\Sigma_1=I_D$, $\mu_1=(0,\dotsc,0)^\top \in \mathbb{R}^D$,
$\mu_2=(1,\dotsc,1,0,\dotsc,0)^T \in \mathbb{R}^D$,
with signal in the first $10$ dimensions, and
\begin{equation}
\label{eq:gauss2}
\begin{split}
\Sigma_2 &= \left( \begin{array}{c|c}
\widetilde{\Sigma}_{10} & 0 \\ \hline
0 & I_{D-10} \end{array}\right), \\
\widetilde{\Sigma}_{10} &= \diag(1.367,  3.175,  3.247,  4.403,  1.249,\\
&\hspace{3.6em}1.969, 4.035,   4.237,  2.813,  3.637).
\end{split}
\end{equation}
We simply pick $10$ numbers at random on the range
$[1,5]$ (other choice would give analogous results).
Bayes  accuracy is fixed at $\approx 0.95$.
From Fig.~\ref{fig:plotsb} we see that
all methods quickly
degenerate as dimensions increases, except
kernel k-groups and spectral clustering which are more stable.

Consider
$x \sim \tfrac{1}{2} \mathcal{N}(\mu_1,\Sigma_1)+
\tfrac{1}{2} \mathcal{N}(\mu_2,\Sigma_2)$
with
\begin{equation}
\label{eq:20gauss}
\begin{split}
2\Sigma_1 &= \Sigma_2 = I_{20} \\
\mu_1 &= (\underbrace{0,\dotsc,0}_{\times 20})^\top ,
\quad \mu_2 = \tfrac{1}{2}
(\underbrace{1,\dotsc,1}_{5},\underbrace{0,\dotsc,0}_{15})^\top.
\end{split}
\end{equation}
Bayes accuracy is $\approx 0.90$.
We now increase the sample size in the range $n \in [10, 400]$.
The results are in Fig.~\ref{fig:plotsc}. We compare
kernel k-groups with different metrics, and we use
the best metric, $\widetilde{\rho}_1$, for spectral clustering.
Note the superior performance of kernel k-groups.

To consider non-Gaussian data, we sample a lognormal mixture,
$x \sim
(1/2) e^{\mathcal{N}(\mu_1,\Sigma_1)}+
(1/2) e^{\mathcal{N}(\mu_2,\Sigma_2)}$, with the same
parameters as in \eqref{eq:20gauss}.
The optimal Bayes accuracy is still $\approx 0.90$.
We use exactly the same metrics as in the Gaussian
mixture of Fig.~\ref{fig:plotsc} to illustrate that kernel k-groups
still performs accurately.

To see the effect of high number of clusters, we sample 10 points
from $\mathcal{N}(\mu_\ell, 0.1 \, I)$ for each class, where $\mu_\ell$
is in a two-dimensional grid spaced by one unit. We thus increase the number
of clusters $\ell \in \{ 2, 3, \dotsc, 50 \}$. For each $\ell$ we
perform 20 Monte Carlo runs and show the mean accuracy and the standard
deviation (errorbars) in Fig.~\ref{fig:plotsf}. The reason spectral
clustering behaves worse is due to the choice of metric \eqref{eq:rho_alpha} ($\alpha=1$).
With a better choice it behaves as accurate as kernel k-groups, but we wish to illustrate
the results with the standard metric from energy statistics. The important observation is
that kernel k-means eventually degenerates as the number of clusters becomes too large,
as opposed to kernel k-groups.

Finally, we show a limitation of kernel k-groups, which is shared
among the other methods except for GMM.
As we can see from Fig.~\ref{fig:plotse}, for highly unbalanced clusters, k-means, spectral
clustering, kernel k-means and kernel k-groups all degenerate
more quickly than GMM.
Here we are generating data according to
\begin{equation}
\label{eq:gauss3}
\begin{split}
x &\stackrel{iid} \sim
\dfrac{n_1}{2N} \mathcal{N}(\mu_1,\Sigma_1)+
\dfrac{n_1}{2N} \mathcal{N}(\mu_1,\Sigma_1), \\
\mu_1 &= (0,0,0,0)^\top , \quad
\mu_2 = 1.5\times (1,1,0,0)^\top, \\
\Sigma_1 &= I_4, \quad
\Sigma_2 = \left(
\begin{array}{c|c}
\tfrac{1}{2} I_2 & 0  \\ \hline
0 & I_2
\end{array}\right), \\
n_1 &= N - m, \quad  n_2 = N + m, \quad N=300.
\end{split}
\end{equation}
We then increase $m \in [0,240]$ making
the clusters progressively more unbalanced.
Based on this experiment, an interesting problem would be to
extend kernel k-groups to account for unbalanced clusters.

\subsection{Community detection in stochastic block models}
\label{sec:block_model}

We now illustrate the versatility of kernel k-groups on the problem of detecting communities in graphs. This problem has important
applications in natural, social and computer sciences (see e.g. \cite{Fortunato:2016}).
The classical  paradigm is the stochastic block model (see \cite{Abe:2018} for
a recent review) where a graph
$\mathcal{G} = (\mathcal{V}, \mathcal{E}, \mathcal{A})$ has its vertex set
partitioned into $k$ clusters, $\mathcal{V} = \mathcal{C}_1 \cup \mathcal{C}_2  \cup \dotsm \cup
\mathcal{C}_k$, with each vertex $i\in \mathcal{V}$ assigned to class $C_\ell$, for some  $\ell \in \{1,\dotsc,k\}$, with probability
$\mathbb{P}[i \in \mathcal{C}_\ell] = 1/k$. Then, conditioned on the class assignment,
edges are created independently with
$\mathbb{P}[\mathcal{A}_{ij} = 1] = p_{\textnormal{in}}$ if
$i,j \in \mathcal{C}_\ell$, i.e. both
vertices are in the same class, and
$\mathbb{P}[\mathcal{A}_{ij} = 1] = p_{\textnormal{out}}$ otherwise.

We are particularly interested
in \emph{sparse graphs} where algorithmic challenges arise.
In this case, the
probabilities scale as\footnote{
Most literature has focused on the case of increasing degrees, i.e.
$ p_{\textnormal{in}} \times n,  p_{\textnormal{out}} \times n \to \infty$ as $n \to \infty$.
This case is considerably easier compared to \eqref{eq:sparse} and under
some conditions total reconstruction is possible. The sparse case
\eqref{eq:sparse} is of both theoretical and practical interest.
}
\begin{equation} \label{eq:sparse}
  p_{\textnormal{in}} = a/n, \qquad p_{\textnormal{out}} = b/n,
\end{equation}
for some constants $a,b > 0$. Thus,
the average degree  is
\begin{equation} \label{eq:avg_degree}
d \equiv \mathbb{E}[d_i] = \left(a + (k-1)b\right) / k.
\end{equation}
We assume that $\mathcal{G}$ is assortative, i.e. $a>b$.
A quantity playing a distinguished role is the ``signal-to-noise ratio,''
denoted by $\lambda$ and defined below. A striking conjecture
has been proposed \cite{Zdeborova:2011}, namely, when $n\to \infty$ there
exists a detectability threshold such that unless
\begin{equation}\label{eq:lambda}
  \lambda \equiv (a-b) / \sqrt{k d} > 1
\end{equation}
there is no polynomial time algorithm able to detect communities.\footnote{
For $k=2$ this conjecture was very recently settled
in \cite{Massoulie:2014,Mossel:2018}, and for $k \ge 3$
in \cite{Abe:2018b}.
}
Therefore, one expects that a good algorithm should be able to detect communities all the way
down to the critical point $\lambda \approx 1$. Although standard spectral methods based
on Laplacian or adjacency matrices work well when the graph is sufficiently dense, they are
suboptimal for sparse graphs \cite{Zdeborova:2013,Javanmard:2016}; in some cases
failing miserably even when other methods such as belief
propagation \cite{Watanabe:2009} and semidefinte relaxations
\cite{Javanmard:2016} perform very well.

\begin{figure*}
\setcounter{figure}{4}
\centering
\subfloat[\label{fig:sbm2}]{\includegraphics[width=0.33\textwidth]{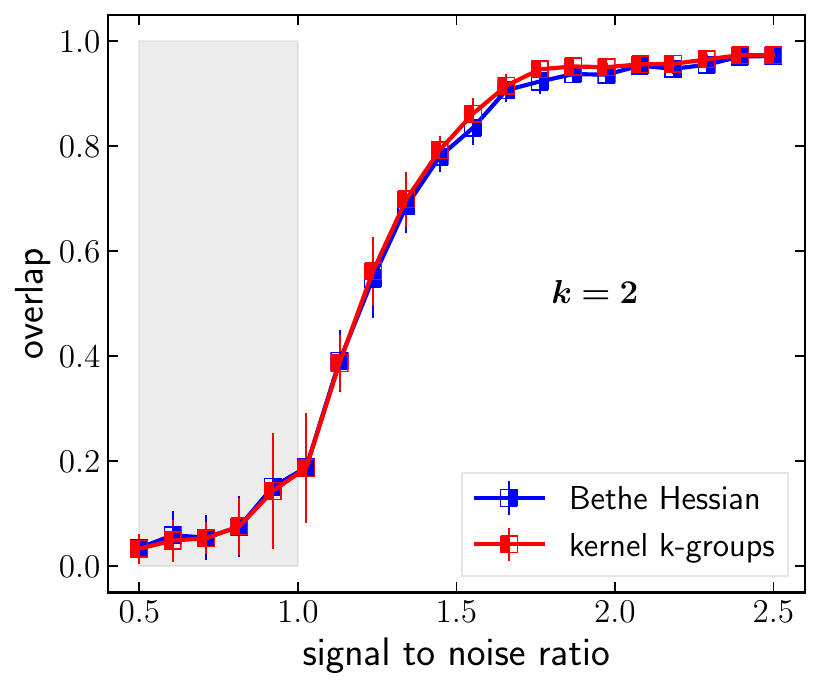}}
\subfloat[\label{fig:sbm5}]{\includegraphics[width=0.33\textwidth]{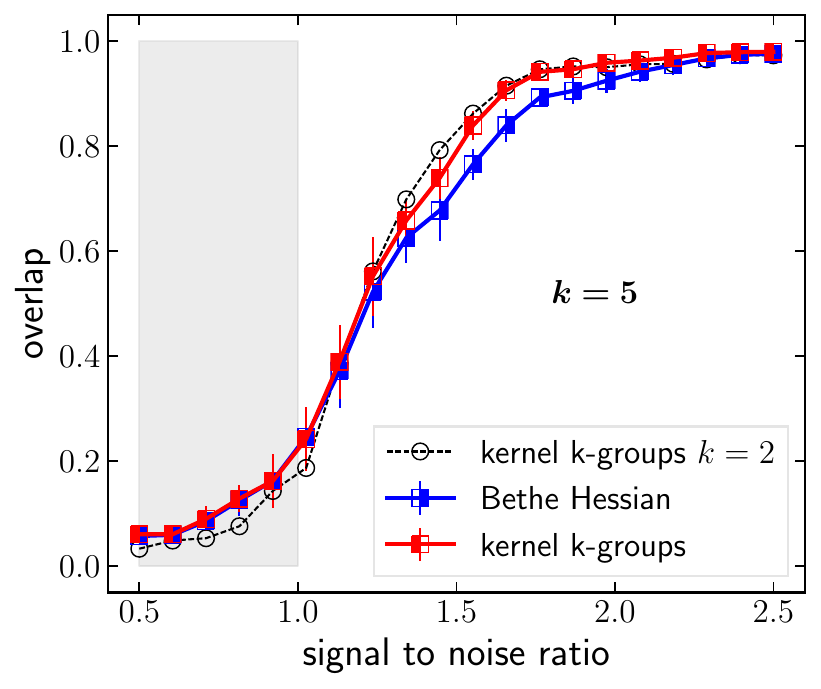}}
\subfloat[\label{fig:sbm15}]{\includegraphics[width=0.33\textwidth]{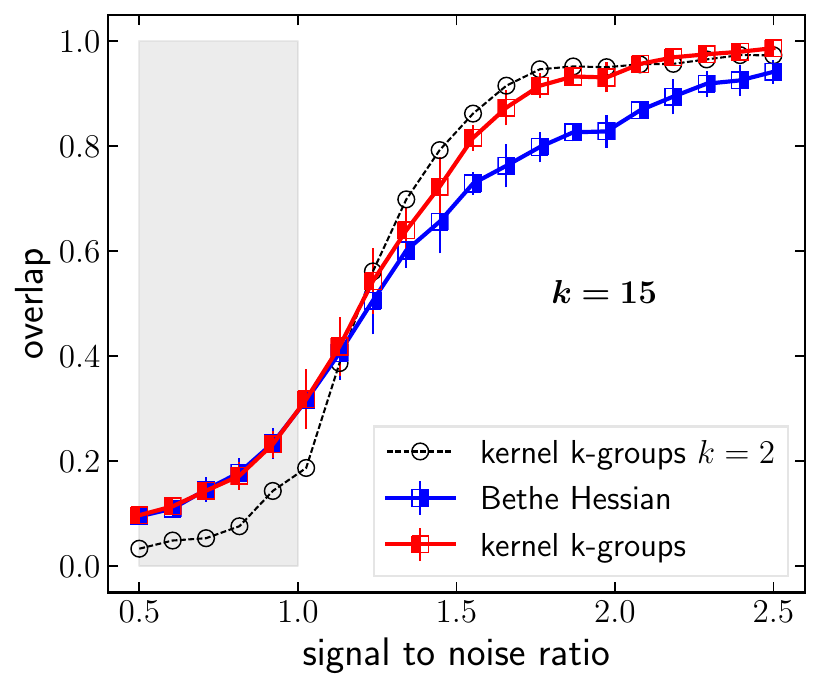}}
\caption{
\label{fig:sbm}
Comparison of kernel k-groups (Algorithm~\ref{algo}) to
Bethe Hessian \cite{Zdeborova:2014} (see also \cite{Javanmard:2016}).
We choose a stochastic block model with $n=1000$ vertices and average degree
$d=3$. We plot the overlap \eqref{eq:overlap} against the signal-to-noise ratio \eqref{eq:lambda}.
We sample $100$ graphs (dots are the mean and error bars
one standard deviation).
(a) Both methods perform closely, but with a minor improvement of kernel k-groups.
(b, c) Bethe Hessian degenerates performance as $k$ increases, contrary
to kernel k-groups which remains accurate;
we include the result with $k=2$ (dashed black line) for reference.
}
\end{figure*}

\begin{figure}
\setcounter{figure}{3}
\centering
\includegraphics[width=0.5\textwidth]{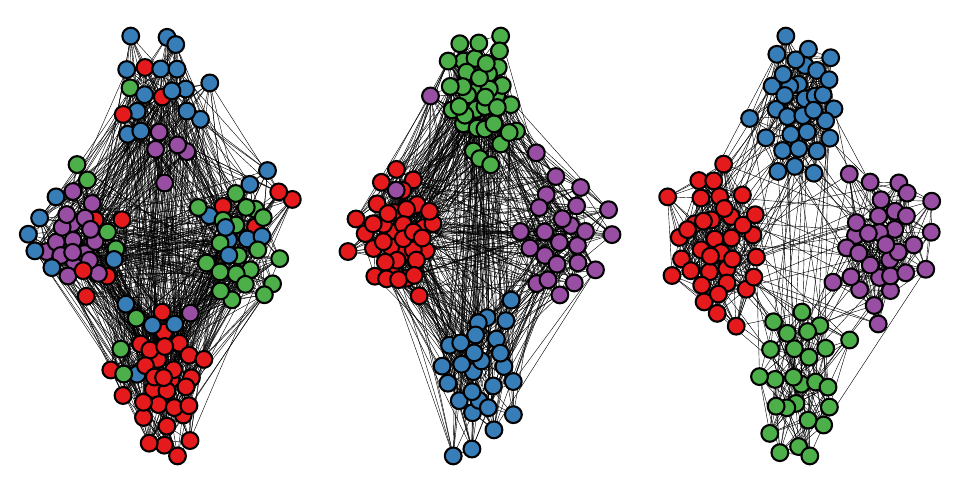}
\put(-250,8){(a)}
\put(-170,8){(b)}
\put(-80,8){(c)}
\caption{\label{fig:communities}
Girvan-Newman benchmark \cite{Girvan:2002} where $n=128$, $k=4$, and
$d=16$.
(a) $\lambda = 1.1$, (b) $\lambda = 2.0$, and
(c) $\lambda = 3.5$. Note how the clusters become more well-defined
as $\lambda>1$ increases.
See Table~\ref{tb:girvan} for clustering results.
}
\end{figure}

\setcounter{figure}{5}

Recently, an interesting approach based on
the Bethe free energy of Ising spin models over graphs was proposed \cite{Zdeborova:2014}.
The resulting spectral method uses
the \emph{Bethe Hessian}
matrix $\mathcal{H}_r \equiv (r^2-1) I - r\mathcal{A} + \mathcal{D}$, where
$r = \sqrt{d}$ and $\mathcal{D}$ is the degree matrix of $\mathcal{G}$.
Such a method achieves  optimal performance for stochastic
block models, being cheaper and even superior than
\cite{Zdeborova:2013} and very close to belief propagation. Hence,
we now consider applying kernel k-groups (Algorithm~\ref{algo}) to the matrix
$\mathcal{H}_r$, and we set the weights $\mathcal{W} = \mathcal{D}$.
We wish to verify if kernel k-groups is able to perform closely to
the Bethe Hessian method \cite{Zdeborova:2014}. To this end, we initialize kernel k-groups with the output
of Bethe Hessian since this allows us to check for potential improvements or not.
To keep consistency with previous works, we evaluate these algorithms on
the
\begin{equation}\label{eq:overlap}
  \mbox{overlap}(Z, \hat{Z}) \equiv
  \tfrac{k}{k-1}\left(\mbox{accuracy}(Z, \hat{Z}) - \tfrac{1}{k}\right)
\end{equation}
which simply discounts pure chance from the accuracy \eqref{eq:accuracy}.

First, consider the classical benchmark test \cite{Girvan:2002}
depicted in Fig.~\ref{fig:communities}. Here the average
degree is $d=16$. By changing
$\lambda$ one controls the within  versus
between class edge probability. Fig.~\ref{fig:communities} shows one realization of such
graphs for three different values of $\lambda$. Note how the communities become well-defined as $\lambda$ increases away from critical point $\lambda = 1$.
In Table~\ref{tb:girvan} we show clustering results for several realizations of such graphs (averaged over 500 trials).
Somewhat surprisingly, kernel k-groups improves over Bethe Hessian.

\begin{table}[h]
\centering
\caption{
\label{tb:girvan}
Clustering graphs from the model shown in Fig.~\ref{fig:communities}.
We report the mean overlap \eqref{eq:overlap} and standard deviation over 500 trials.
(The case $\lambda=0.6$ is undetectable; see \eqref{eq:lambda}.)
}
\begin{tabular}{@{}lll}
$\lambda$ & Bethe Hessian & kernel k-groups \\ \midrule[1pt]
$0.6$ &  $0.183 \pm 0.058$ &  $0.177 \pm 0.057$ \\
$1.1$ &  $0.485 \pm 0.123$ &  $\bm{0.489 \pm 0.133}$ \\
$1.5$ &  $0.840 \pm 0.075$  &  $\bm{0.870 \pm 0.074}$ \\
$1.8$ &  $0.943 \pm 0.033$  &   $\bm{0.960 \pm 0.027}$ \\
$2.0$ &  $0.975 \pm 0.022$  &   $\bm{0.982 \pm 0.020}$ \\
$2.5$ &  $0.997 \pm 0.006$  &   $\bm{0.998 \pm 0.005}$ \\
$3.5$ &  $1.000 \pm 0.000$  & $\bm{1.000 \pm 0.000}$ \\
\bottomrule[1pt]
\end{tabular}
\end{table}

We now approach a more challenging case to analyze the performance
of kernel k-groups all the way through the phase transition \eqref{eq:lambda}. Consider
a stochastic block model with $n=1000$ vertices and average degree
$d = 3$, which is much sparser than the graphs shown in Fig.~\ref{fig:communities}.
Thus, in Fig.~\ref{fig:sbm} we plot the overlap \eqref{eq:overlap} versus $\lambda$ for different
number of communities.
In Fig.~\ref{fig:sbm2} we see that kernel k-groups is very close
to Bethe Hessian, although it has mild improvements.
As we increase $k$, we see from Fig.~\ref{fig:sbm5} and Fig.~\ref{fig:sbm15} that Bethe Hessian
degenerates performance while kernel k-groups remains stable (we include the line for the case $k=2$ as a reference).
From these examples it is clear that kernel k-groups can improve over
Bethe Hessian.

It is worth mentioning that kernel k-groups is
even cheaper than Bethe Hessian (see Theorem~\ref{thm:complexity}). However,
it needs a good initialization to be able to cluster with small values of
$\lambda$. Here we initialized with
Bethe Hessian and kernel k-groups provided a refined solution. We verified that under
random or k-means++ initialization it is unable
to cluster sparse graphs with small $\lambda$, although it is able to cluster
cases where $\lambda$ and/or $d$ are sufficiently large.

Finally, we stress that kernel k-means cannot even be applied to this
problem since $\mathcal{H}_r$ is not positive semidefinite.
We actually tried to run the algorithm but it diverged innumerable times.

\subsection{Real data experiments}

\begin{figure}
\centering
\includegraphics[scale=.52]{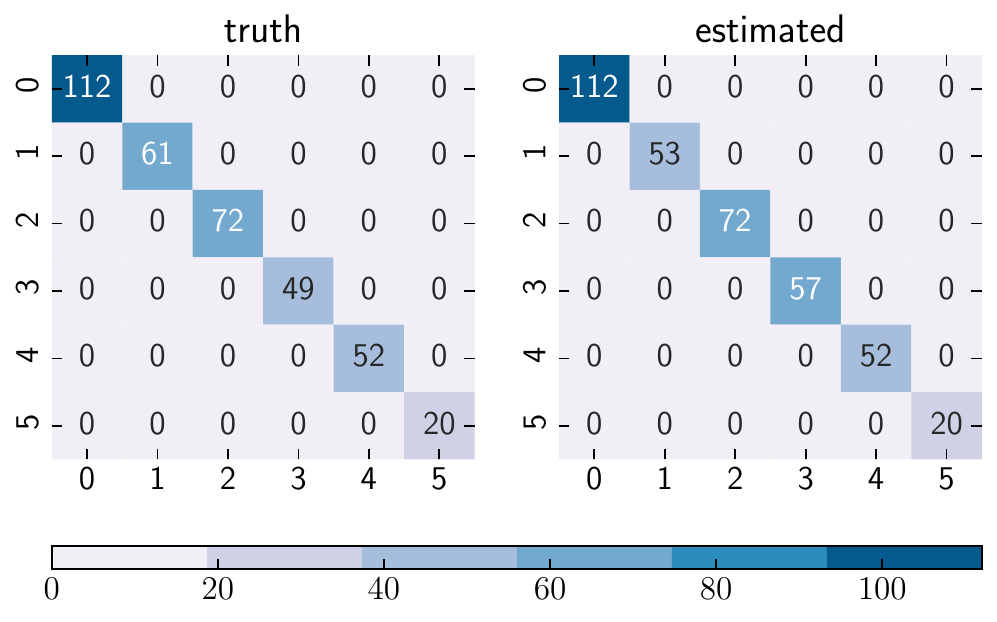}
\caption{
\label{fig:derma_heat}
Clustering the dermatology
dataset \cite{Dua:2019,Guvenir1998} with kernel k-groups
using $\rho_{1/2}$ (see \eqref{eq:rho_alpha}). We show a heatmap of
the class membership.
This should be compared with Table~2 of
\cite{RizzoClustering} (our results are more accurate).
See also Table~\ref{tb:dermatology_accuracy}.
}
\end{figure}

We first consider the dermatology dataset
\cite{Dua:2019,Guvenir1998}
which has 366 data points and 34 attributes, 33 being
linear valued and one categorical. There are 8 data points with
missing entries in the ``age'' column. We complete such entries
with the mean of the entire column, and then we normalize the
data. 
There are $6$ classes,
and this is a challenging problem. We refer the reader
to \cite{Dua:2019,Guvenir1998} for a complete description of the dataset,
and also to \cite{RizzoClustering} where this dataset was previously
analyzed. In Fig.~\ref{fig:derma_heat} we show a heatmap of the class membership
obtained with kernel
k-groups using the metric \eqref{eq:rho_alpha} with $\alpha=1/2$.
In Table~\ref{tb:dermatology_accuracy}
we compare it to other methods, and also
with \cite{RizzoClustering,Kgroups}.
One can see that kernel k-groups has a better performance.

\begin{table}[h]
\caption{
\label{tb:dermatology_accuracy}
Dermatology dataset \cite{Dua:2019,Guvenir1998}
(see also Fig.~\ref{fig:derma_heat}). We show accuracy \eqref{eq:accuracy},
adjusted Rand index (aRand), and NMI.
\cite{RizzoClustering}
obtained $\textnormal{aRand}=0.9195$, while \cite{Kgroups}
obtained $\textnormal{aRand}=0.9188$
where points with missing
entries are removed. Below we complete the missing entries with the mean.
If we remove points with missing entries, kernel k-groups provides
an additional
improvement of $\textnormal{accuracy}=0.964$,
$\textnormal{aRand}=0.939$ and
$\textnormal{NMI}=0.937$.
}
\centering
\begin{tabular}{@{}llll@{}} 
       method        & accuracy & aRand  & NMI   \\ \midrule[1pt]
        kmeans       & $0.738$        & $0.703$        & $0.860$ \\
         GMM         & $0.781$         & $0.737$       & $0.821$ \\
 spectral clustering & $0.956$         & $0.920$       & $0.917$ \\
    kernel k-means   & $0.872$         & $0.832$       & $0.879$ \\
   kernel k-groups   & $\bm{0.962}$    & $\bm{0.936}$  & $\bm{0.932}$ \\ \bottomrule[1pt]
\end{tabular}
\end{table}

Next, we consider several datasets from the public
UCI repository \cite{Dua:2019} (which the reader is referred to for details).
With the purpose of comparing kernel k-groups to kernel k-means and spectral clustering on equal footing, we fix the metric \eqref{eq:rho_tilde} in all cases, with $\sigma = 2$.
We verified that this choice in general works well for these methods in the
considered datasets.
We stress that we are not interested in the highest performance on individual datasets and such a choice of metric may be suboptimal for particular cases. However, by fixing the metric,
each algorithm solves the same optimization problem (for a given dataset) and the comparison is fair.
In our experiments,
we use a single initialization and perform
a total of 100 Monte Carlo runs, where kernel k-groups and kernel k-means are initialized with
k-means++, while spectral
clustering uses the default implementation from scikit-learn \cite{scikit-learn}.
In Table~\ref{tb:several} we report the mean value of NMI.\footnote{For ``epileptic'', ``anuran'' and ``pendigits'' we uniformly subsampled 2000 points.
For ``gene expression'' we use only the first 200 features since the total number of original features is too large.}
To complement this table,
in Fig.~\ref{fig:swarm}
we show violin plots
where one can
see the distribution of the output of each algorithm.
We conclude that in most cases kernel k-groups has an improved
performance, and usually a smaller variance compared to
kernel k-means.

\begin{table}
\centering
\caption{
\label{tb:several}
Kernel k-groups (groups), kernel k-means (means) and spectral clustering (spectral) on
several public datasets from UCI repository \cite{Dua:2019}.
We fix the metric $\tilde{\rho}_2$ \eqref{eq:rho_tilde}. We perform
100 Monte Carlo runs and report the mean value of NMI. We complement this
table by showing the distribution of these results in Fig.~\ref{fig:swarm}.
}
\begin{tabular}{@{}lllllll@{}}
dataset & $n$ & $k$ & $D$ & means & spectral & groups  \\ \midrule[1pt]
libras movement & $360$ & $15$ & $90$ & $0.535$ & $\bm{0.606}$ & $0.590$ \\
vehicle & $846$ & $4$ & $18$  & $\bm{0.166}$ & $0.120$ & $0.126$ \\
forest & $523$ & $4$ & $27$  & $\bm{0.457}$ & $0.378$ & $0.450$ \\
covertype & $1400$ & $7$ & $54$  & $\bm{0.247}$ & $0.230$ & $0.242$ \\
segmentation & $210$ & $7$ & $19$  & $0.613$ & $0.597$ & $\bm{0.618}$ \\
wine & $178$ & $3$ & $13$  & $0.867$ & $0.879$ & $\bm{0.928}$ \\
synthetic control & $600$ &  $6$ & $60$   & $0.781$ &  $0.770$ & $\bm{0.790}$ \\
fertility & $100$ & $2$ & $9$  & $0.019$ & $0.020$ & $\bm{0.021}$ \\
glass & $214$ &  $6$ &  $9$  & $0.396$ & $0.377$  & $\bm{0.413}$ \\
ionosphere & $351$ & $2$ & $3$   & $0.192$ & $0.160$ & $\bm{0.205}$ \\
epileptic & $2000$ & $5$ & $178$   & $0.222$  &  $0.195$  & $\bm{0.251}$ \\
anuran & $2000$ & $10$ & $22$   & $0.622$ & $0.593$ & $\bm{0.635}$ \\
gene expression & $801$ & $5$ & $200$ & $0.554$ & $0.575$  & $\bm{0.637}$ \\
pendigits & $2000$ & $10$ & $16$ & $0.710$ & $0.715$ & $\bm{0.715}$ \\
seeds  & $210$ &  $3$ & $7$  & $0.720$ & $0.686$ & $\bm{0.748}$ \\
spect heart  & $267$ & $2$ & $22$  & $0.163$ &  $0.016$ & $\bm{0.180}$ \\
iris & $150$ & $3$ & $4$  & $0.748$ &    $0.742$ &   $\bm{0.759}$ \\
contraceptive & $1473$ & $3$ & $9$  & $0.036$ & $0.034$ & $\bm{0.037}$ \\
\bottomrule[1pt]
\end{tabular}
\end{table}

\begin{figure*}
\includegraphics[width=1\textwidth]{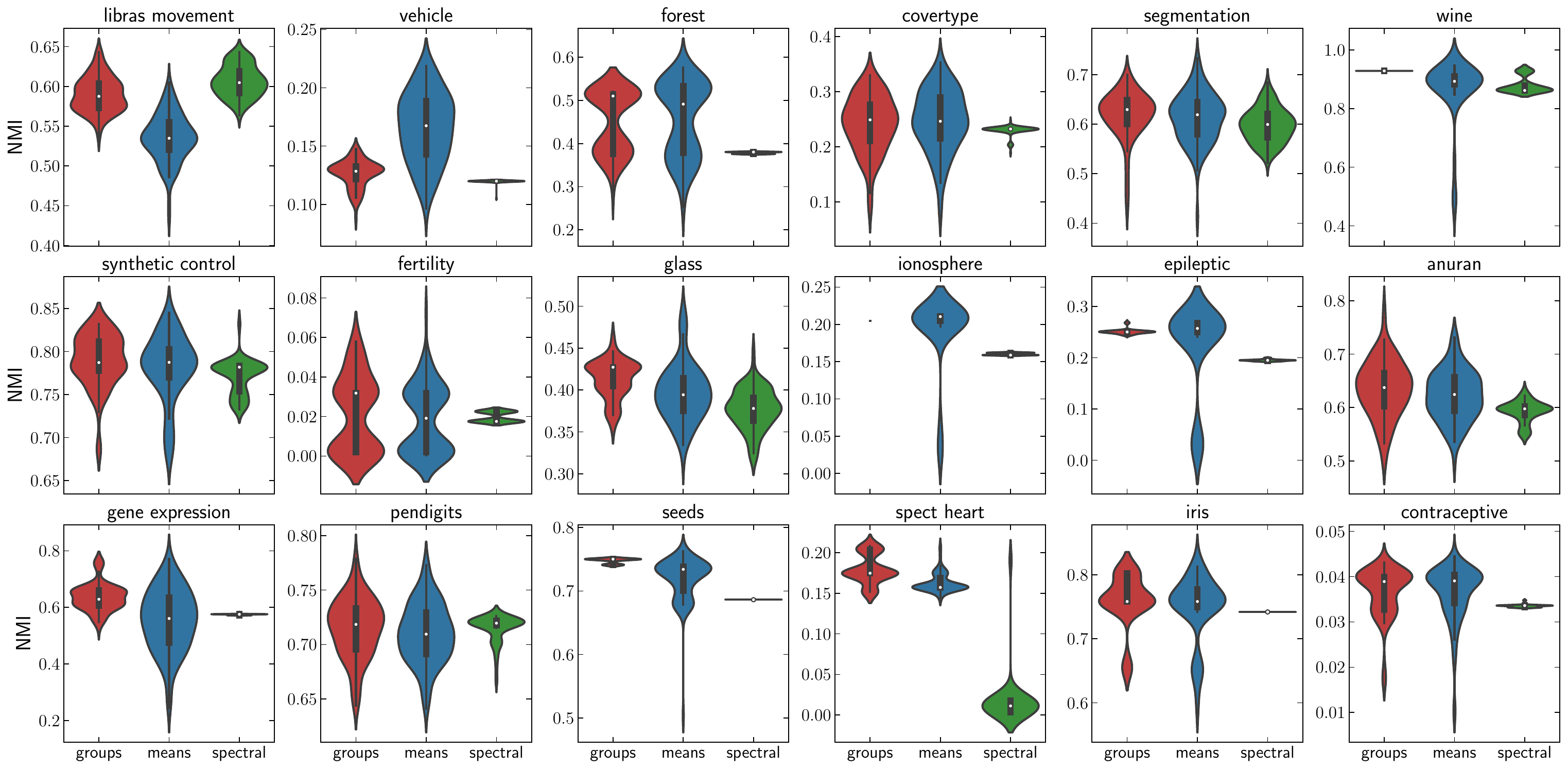}
\caption{
\label{fig:swarm}
Violin plots showing the distribution of the results obtained in the experiments of Table~\ref{tb:several}:
kernel k-groups (red), kernel k-means (blue) and spectral clustering (green).
We consider 100 Monte Carlo runs for each experiment.
In most cases kernel k-groups has a smaller variance compared to
kernel k-means (both are initialized with k-means++).
}
\end{figure*}

We now consider kernel k-groups applied to community detection
in real world networks (we use the approach described in
in Section~\ref{sec:block_model}) and
compare it to the Bethe Hessian \cite{Zdeborova:2014}
and spectral clustering applied to the adjacency matrix, which in general
does not work well for sparse graphs but we include as a reference
and also because many real networks are not necessarily sparse.
We start with four datasets that were considered extensively in the literature,
for which our results are shown in Table~\ref{tb:real_graphs}.
Kernel k-groups perform similarly to Bethe Hessian, but it was able to
improve in one of these datasets.


\begin{table}[h]
\caption{
\label{tb:real_graphs}
Community detection in real world networks with ground truth.
We compare kernel k-groups to spectral clustering (on the adjacency matrix)
and Bethe Hessian \cite{Zdeborova:2014}. We report the overlap
\eqref{eq:overlap}.
}
\centering
\begin{tabular}{@{}llll@{}}
network & spectral &  Bethe & k-groups \\ \midrule[1pt]
karate \cite{karate} ($k=2, d=5$) & $1.00$ & $1.00$ & $1.00$ \\
dolphins \cite{dolphins} ($k=2, d=5$) & 0.64 & 0.97 & $\bm{1.00}$ \\
football \cite{Girvan:2002} ($k=12, d=11$) & 0.90 & 0.90 & 0.90 \\
polbooks \cite{polbooks} ($k=3, d=8$)  & 0.58 & 0.75 & 0.75 \\
\bottomrule[1pt]
\end{tabular}
\end{table}

We now consider networks without ground truth.
We use three standard metrics in graph clustering:
\emph{performance}, \emph{coverage}, and \emph{modularity}.
(We refer to \cite{Fortunato:2010} for details.)
These metrics are always in the range $[0,1]$ with larger values
indicating better defined communities.
We consider the network obtained from both hemispheres of a drosophila brain \cite{drosophila}.
The data is separated into left and right parts. It is also heavily connected, e.g.
the left graph has 209 nodes and the right graph has 213 nodes, and for both the
average degree is $d\approx 53$. Thus we do not expect the modularity to be a good measure
of clustering in this setting.
The number of clusters was found by computing the number of negative
eigenvalues of the matrix $\mathcal{H}_r$ (see Section~\ref{sec:block_model}) \cite{Zdeborova:2014}.
For both graphs we found that $k=2$.
Our results are reported in Table~\ref{tb:drosophila} and
an illustration of the clustered graph with kernel k-groups is in Fig.~\ref{fig:graphs}a.
Note that kernel k-groups had better scores than competitor methods.

\begin{table}[h]
\centering
\caption{\label{tb:drosophila}
Clustering the drosophila connectome \cite{drosophila}.
Left: $|\mathcal{V}| = 209$, $|\mathcal{E}| = 11118$, $d = 53$.
We found $k=2$ and the number of nodes in each community are $\{116, 93\}$.
Right: $|\mathcal{V}| = 213$, $|\mathcal{E}| = 11250$, $d = 53$.
The number of nodes in each community are $\{112, 101\}$.
Kernel k-groups improve over the other methods according to
performance and coverage.
For an illustration see Fig.~\ref{fig:graphs}a.
}
\begin{tabular}{@{}cllll@{}}
drosophila & method &  performance & coverage & modularity \\ \midrule[1pt]
\multirow{3}{*}{\emph{left}} &
 spectral & $0.43$ &  $0.65$ &  $0.074$ \\
& Bethe & $0.48$ & $0.71$  & $0.082$ \\
& k-groups & $\bm{0.67}$ & $\bm{0.84}$ & $0.041$ \\ \midrule[.2pt]
\multirow{3}{*}{\emph{right}} &
spectral & $0.43$ & $0.71$ & $0.074$ \\
& Bethe & $0.44$ & $0.81$ &  $0.071$ \\
& k-groups & $\bm{0.66}$ &  $\bm{0.83}$ &  $0.055$ \\
\bottomrule[1pt]
\end{tabular}
\end{table}

Next we consider the
arXiv GR-QC dataset \cite{GrQC} which contains a network of scientific collaborations
in General Relativity and Quantum Cosmology during a period of 10 years
(from Jan 1993 to Jan 2003).
Our results are shown in Table~\ref{tb:GR-QC}.
We found $k=165$ for this network from the negative eigenvalues of $\mathcal{H}_r$.
Note from Table~\ref{tb:GR-QC} that kernel k-groups improves over
the Bethe Hessian according to the three metrics.
An illustration of the graph grouped into communities is shown in Fig.~\ref{fig:graphs}b.
We see that there is a large community (purple) followed
by several other smaller ones.

\begin{table}[h]
\caption{
\label{tb:GR-QC}
Community detection in arXiv GR-QC network \cite{GrQC}.
This graph has $|\mathcal{V}| = 5242$, $|\mathcal{E}| = 14496$, with average degree
$d\approx 5.5$. Kernel k-groups achieves higher scores for the three metrics
compared to the alternative methods.
}
\centering
\begin{tabular}{@{}llll@{}}
method &  performance & coverage & modularity \\ \midrule[1pt]
spectral & $0.63$ & $0.57$  & $0.49$ \\
Bethe & $0.78$  & $0.71$   & $0.46$ \\
k-groups & $\bm{0.86}$ &  $\bm{0.81}$  & $\bm{0.55}$ \\
\bottomrule[1pt]
\end{tabular}
\end{table}


\begin{figure*}[t]
\centering
\includegraphics[scale=.6, trim={0 -2cm 0 0}]{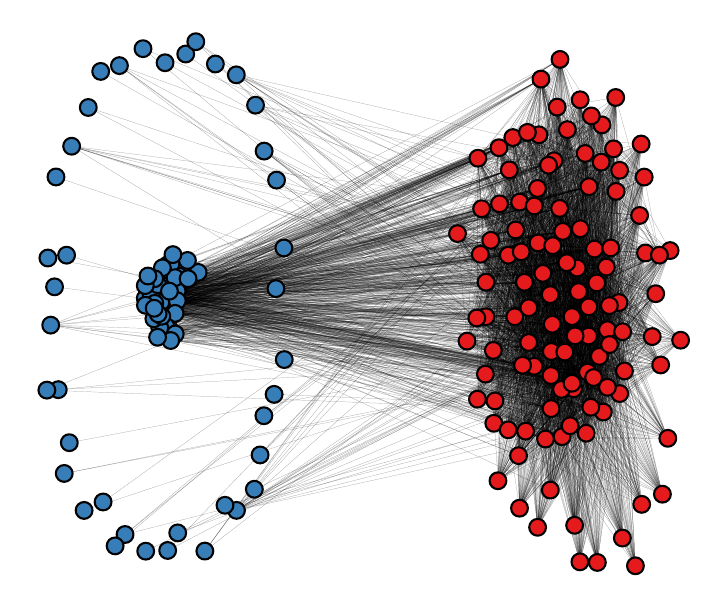}
\includegraphics[scale=.65]{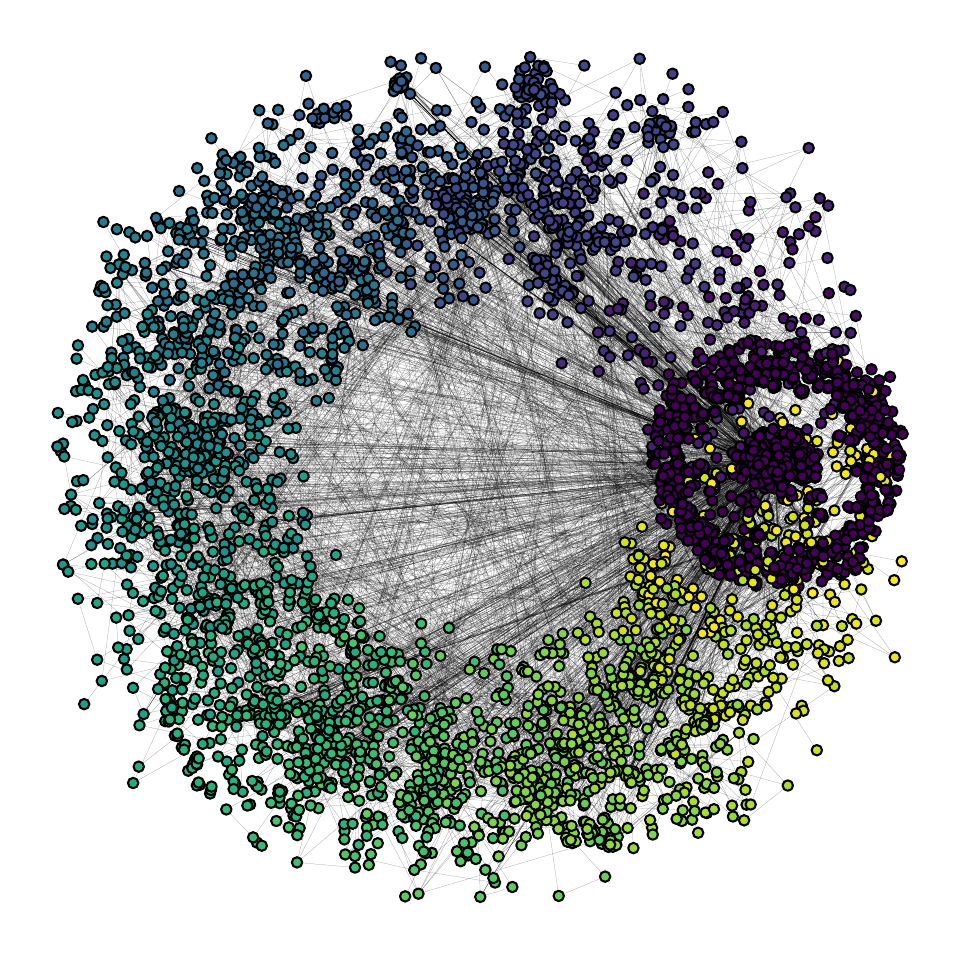}
\put(-500,30){\small{(a)}}
\put(-250,30){\small{(b)}}
\vspace{-.3cm}
\caption{\label{fig:graphs}
(a) Drosophila left hemisphere \cite{drosophila} after clustered into $k=2$ communities with kernel k-groups
(see Table~\ref{tb:drosophila}).
A very similar picture is obtained for the right hemisphere, and thus omitted.
For the left hemisphere one class has $116$ nodes and the
other $93$.
For the right hemisphere one class has $112$ nodes and the other $101$.
(b) Graph obtained after clustering the arXiv GR-QC network \cite{GrQC} (see Table~\ref{tb:GR-QC}).
Note that there is a large community (purple) that is highly connected, and many other smaller communities. The number of nodes in the top 5 communities are $\{1955, 200, 123, 101, 79, \dotsc \}$.
}
\end{figure*}

\section{Conclusion}
\label{sec:conclusion}

On the theoretical side,
we proposed a formulation to clustering based on energy statistics,
valid for arbitrary spaces of negative type.
Such a mathematical construction
reduces to a QCQP in the associated RKHS, as demonstrated in
Proposition~\ref{th:qcqp3}.
We showed that the optimization problem
is equivalent
to kernel k-means, once the kernel is fixed, and also
to several graph partitioning problems.

On the algorithmic side,
we extended Hartigan's method to kernel spaces and proposed
Algorithm~\ref{algo}, which we called kernel k-groups.
This method was compared to kernel k-means and spectral clustering algorithms, besides
k-means and GMM. Our numerical results show a superior performance of kernel k-groups,\footnote{
An implementation of unweighted $k$-groups is publicly available in the
energy package for R \cite{energy}.}
which is notable in higher dimensions.
We stress that kernel k-groups has the same complexity of kernel
k-means algorithm, $\mathcal{O}(n^2)$,
which an order of magnitude lower than
spectral clustering, $\mathcal{O}(n^3)$.

We illustrated the advantages of kernel k-groups when applied to
community detection in sparse graphs sampled from the stochastic block model,
which has several important applications ranging from physical, computer, biological, and
social sciences. We thus expect that kernel k-groups may find wide applicability.
We also tested the proposed method in several real world datasets, and in the vast majority of cases it improved over the alternatives.

Kernel k-groups suffers a limitation, also shared by kernel k-means and spectral
clustering, which involves highly unbalanced
clusters. An interesting problem  that we leave open
is to extend the method to such
situations.
Finally, kernel methods can greatly benefit from sparsity and
fixed-rank approximations of the Gram matrix and there is plenty
of room to make kernel k-groups scalable to large data.

\ifCLASSOPTIONcompsoc
  \section*{Acknowledgments}
\else
  \section*{Acknowledgment}
\fi

We would like to thank Carey Priebe for discussions.
We would like to acknowledge the support of the Transformative
Research Award (NIH \#R01NS092474) and  the Defense Advanced Research Projects
Agency’s (DARPA) SIMPLEX program through SPAWAR contract N66001-15-C-4041,
and also the DARPA Lifelong Learning Machines program through contract
FA8650-18-2-7834.

\appendix[Two-Class Problem in One Dimension]
\label{sec:twoclass}

%




Here we consider the simplest possible case which
is one-dimensional data and a two-class problem. We propose
an algorithm that does not depend on initialization. We used this simple
scheme to compare with kernel k-groups given
in algorithm \ref{algo}. Both algorithms have the same clustering
performance in the one-dimensional examples that we tested.

Let us fix
$\rho(x,y) = |x - y|$ according to the standard energy distance. We also
fix the weights $w(x) = 1$ for every data point $x$. We can
thus
compute the function
\eqref{eq:g_def} in $\OO(n \log n)$ and minimize
$W$ directly.
This is done by noting that
\begin{equation}
\begin{aligned}
|x - y|  &= (x-y)\Ind{x \ge y} -
(x-y) \Ind{x < y}  \\
&=
x \left( \Ind{x \ge y} - \Ind{x < y} \right)  +
y \left( \Ind{y > x} - \Ind{y \le x} \right)
\end{aligned}
\end{equation}
where we have the indicator function defined by
$\Ind{A}=1$ if $A$ is true, and $\Ind{A}=0$ otherwise.
Let $\C$ be a partition with
$n$ elements. Using the above distance we have
\begin{equation}
\label{eq:g_ind}
 g\left(\C,\C\right) =  \dfrac{1}{n^2}
\sum_{x \in \C}
\sum_{y \in \C}
x \left(
\Ind{x \ge y} + \Ind{y > x} -
\Ind{x \ge y}-\Ind{x < y} \right) .
\end{equation}
The sum over $y$ can be eliminated since each term in
the parenthesis is simply counting the number of elements in $\C$ that satisfy
the condition of the indicator function. Assuming
that we first order the data in $\C$, obtaining
$\tC = [ x_j \in \C: x_1 \le x_2 \le \dotsm \le x_{n}]$, we
get
\begin{equation}
\label{eq:g1d}
g\big(\tC, \tC \big) =
\dfrac{2}{n^2} \sum_{\ell=1}^n (2\ell - 1 - n) x_\ell .
\end{equation}
Note that the cost of computing
$g\big( \tC, \tC \big)$
is $\OO(n)$ and the cost of
sorting the data
is at the most $\OO(n\log n)$.
Assuming that each partition is ordered,
$\mathbb{X} = \bigcup_{j=1}^k \tC_j$,
the within energy dispersion
can be written explicitly as
\begin{equation}
\label{eq:w1d}
W\big( \tC_1,\dotsc,\tC_k \big) =
\sum_{j=1}^k \sum_{\ell=1}^{n_j} \dfrac{2\ell - 1 - n_j}{n_j} \, x_\ell.
\end{equation}

For a two-class problem we can use the formula
\eqref{eq:w1d} to cluster the data
through a simple algorithm
as follows. We first order
the entire dataset, $\mathbb{X} \to \widetilde{\mathbb{X}}$. Then
we compute \eqref{eq:w1d} for each possible split of $\widetilde{\mathbb{X}}$
and pick the point which gives the minimum value of $W$.
This procedure is described in Algorithm~\ref{algo1d}.
Note that this algorithm is deterministic,
however,
it only works for one-dimensional data with Euclidean distance. Its total
complexity is $\OO(n\log n + n^2) = \OO(n^2)$.


\begin{algorithm}[t]
\begin{algorithmic}[1]
\INPUT data $\mathbb{X}$
\OUTPUT label matrix $Z$
\STATE sort $\mathbb{X}$ obtaining
$\widetilde{\mathbb{X}}= [ x_1,\dotsc,x_n ]$
    \FOR{$j\in [ 1,\dotsc,n ]$}
        \STATE $\tC_{1,j} \leftarrow [x_i: i=1,\dotsc,j]$
        \STATE $\tC_{2,j} \leftarrow [x_i : i=j+1,\dotsc,n]$
        \STATE
            $W^{(j)} \leftarrow W \big( \tC_{1,j},\tC_{2,j}\big)$
            \hfill (see \eqref{eq:w1d})
    \ENDFOR
    \STATE $j^\star \leftarrow \argmin_j W^{(j)}$
	\FOR{$j \in [1,\dotsc,n]$}
		\IF{ $j\le j^\star$ }
    		\STATE $Z_{j\bullet} \leftarrow (1,0) $
        \ELSE
			\STATE $Z_{j\bullet} \leftarrow (0,1)$
		\ENDIF
	\ENDFOR
\end{algorithmic}
\caption{
\label{algo1d}
Clustering algorithm to
find local solutions to the optimization
problem \eqref{eq:minimize}
for a two-class problem in one dimension.
}
\end{algorithm}




%

\bibliographystyle{unsrt}
\bibliography{biblio.bib}

\begin{thebibliography}{10}

\bibitem{Szkely2013}
G.~J. Sz{\'e}kely and M.~L. Rizzo.
\newblock {Energy Statistics: A Class of Statistics Based on Distances}.
\newblock {\em Journal of Statistical Planning and Inference}, 143:1249--1272,
  2013.

\bibitem{Szkely2017}
G.~J. Sz{\'e}kely and M.~L. Rizzo.
\newblock {The Energy of Data}.
\newblock {\em Annu. Rev. Stat. Appl.}, 207:447--479, 2017.

\bibitem{RizzoVariance}
M.~L. Rizzo and G.~J. Sz{\'e}kely.
\newblock {DISCO Analysis: A Nonparametric Extension of Analysis of Variance}.
\newblock {\em The Annals of Applied Statistics}, 4(2):1034--1055, 2010.

\bibitem{Szekely2007-mm}
G.~J. Sz{\'e}kely, M.~L. Rizzo, and N.~K. Bakirov.
\newblock Measuring and testing dependence by correlation of distances.
\newblock {\em Ann. Stat.}, 35(6):2769--2794, 2007.

\bibitem{RizzoClustering}
G.~J. Sz{\'e}kely and M.~L. Rizzo.
\newblock {Hierarchical Clustering via Joint Between-Within Distances:
  Extending Ward's Minimum Variance Method}.
\newblock {\em Journal of Classification}, 22(2):151--183, 2005.

\bibitem{Kgroups}
S.~Li and M.~Rizzo.
\newblock {$k$-Groups: A Generalization of $k$-Means Clustering}.
\newblock arXiv:1711.04359 [stat.ME], 2017.

\bibitem{Lyons}
R.~Lyons.
\newblock {Distance Covariance in Metric Spaces}.
\newblock {\em The Annals of Probability}, 41(5):3284--3305, 2013.

\bibitem{Sejdinovic2013}
D.~Sejdinovic, B.~Sriperumbudur, A.~Gretton, and K.~Fukumizu.
\newblock {Equivalence of Distance-Based and RKHS-Based Statistic in Hypothesis
  Testing}.
\newblock {\em The Annals of Statistics}, 41(5):2263--2291, 2013.

\bibitem{Shen2018-st}
C.~Shen and J.~T. Vogelstein.
\newblock The exact equivalence of distance and kernel methods for hypothesis
  testing.
\newblock arXiv:1806.05514 [stat.ML], 2018.

\bibitem{Lloyd}
S.~P. Lloyd.
\newblock {Least Squares Quantization in PCM}.
\newblock {\em IEEE Transactions on Information Theory}, 28(2):129--137, 1982.

\bibitem{MacQueen}
{J. B. MacQueen}.
\newblock {Some Methods for Classification and Analysis of Multivariate
  Observations}.
\newblock In {\em {Proceedings of the 5th Berkeley Symposium on Mathematical
  Statistics and Probability}}, volume~1, pages 281--297. University of
  California Press, 1967.

\bibitem{Forgy}
E.~Forgy.
\newblock {Cluster Analysis of Multivariate Data: Efficiency versus
  Interpretabiliby of Classification}.
\newblock {\em Biometrics}, 21(3):768--769, 1965.

\bibitem{Smola}
B.~Sch\"{o}lkopf, A.~J. Smola, and K.~R. M\"{u}ller.
\newblock {Nonlinear Component Analysis as a Kernel Eigenvalue Problem}.
\newblock {\em Neural Computation}, 10:1299--1319, 1998.

\bibitem{Girolami}
M.~Girolami.
\newblock {Kernel Based Clustering in Feature Space}.
\newblock {\em Neural Networks}, 13(3):780--784, 2002.

\bibitem{Mercer}
J.~Mercer.
\newblock {Functions of Positive and Negative Type and their Connection with
  the Theory of Integral Equations}.
\newblock {\em Proceedings of the Royal Society of London}, 209:415--446, 1909.

\bibitem{Dhillon2}
I.~S. Dhillon, Y.~Guan, and B.~Kulis.
\newblock {Kernel K-means: Spectral Clustering and Normalized Cuts}.
\newblock In {\em Proceedings of the Tenth ACM SIGKDD International Conference
  on Knowledge Discovery and Data Mining}, KDD '04, pages 551--556, New York,
  NY, USA, 2004. ACM.

\bibitem{Dhillon}
I.~S. Dhillon, Y.~Guan, and B.~Kulis.
\newblock {Weighted Graph Cuts without Eigenvectors: A Multilevel Approach}.
\newblock {\em IEEE Transactions on Pattern Analysis and Machine Intelligence},
  29(11):1944--1957, 2007.

\bibitem{Filippone}
M.~Filippone, F.~Camastra, F.~Masulli, and S.~Rovetta.
\newblock {A Survey of Kernel and Spectral Methods for Clustering}.
\newblock {\em Pattern Recognition}, 41:176--190, 2008.

\bibitem{Hartigan1975}
J.~A. Hartigan.
\newblock {\em {Clustering Algorithms}}.
\newblock John Wiley \& Sons, Inc., New York, NY, USA, 1975.

\bibitem{Hartigan1979}
J.~A. Hartigan and M.~A. Wong.
\newblock {Algorithm AS 136: A $k$-Means Clustering Algorithm}.
\newblock {\em Journal of the Royal Statistical Society. Series C (Applied
  Statistics)}, 28(1):100--108, 1979.

\bibitem{Telgarsky}
M.~Telgarsky and A.~Vattani.
\newblock {Hartigan's Method: $k$-Means Clustering without Voronoi}.
\newblock In {\em Proceedings of the 13th International Conference on
  Artificial Intelligence and Statistics (AISTATS)}, volume~9, pages 313--319.
  JMLR, 2010.

\bibitem{Slonin}
N.~Slonim, E.~Aharoni, and K.~Crammer.
\newblock {Hartigan's $k$-Means versus Lloyd's $k$-Means --- Is it Time for a
  Change?}
\newblock In {\em Proceedings of the 20th International Conference on
  Artificial Intelligence}, pages 1677--1684. AAI Press, 2013.

\bibitem{Aronszajn}
N.~Aronszajn.
\newblock {Theory of Reproducing Kernels}.
\newblock {\em Transactions of the American Mathematical Society},
  68(3):337--404, 1950.

\bibitem{Gretton2012}
A.~Gretton, K.~M. Borgwardt, M.~J. Rasch, B.~Sch{\"olkopf}, and A.~Smola.
\newblock {A Kernel Two-Sample Test}.
\newblock {\em Journal of Machine Learning Research}, 13:723--773, 2012.

\bibitem{Long:2007}
L.~Song, A.~Smola, A.~Gretton, and K.~Borgwardt.
\newblock A dependence maximization view of clustering.
\newblock {\em ICML}, 2007.

\bibitem{Berg1984}
C.~Berg, J.~P.~R. Christensen, and P.~Ressel.
\newblock {\em {Harmonic Analysis on Semigroups: Theory of Positive Definite
  and Related Functions}}.
\newblock Graduate Text in Mathematics 100. Springer, New York, 1984.

\bibitem{Malik}
J.~Shi and J.~Malik.
\newblock {Normalized Cust and Image Segmentation}.
\newblock {\em IEEE Transactions on Pattern Analysis and Machine Intelligence},
  22(8):888--905, 2000.

\bibitem{NgJordan}
A.~Y. Ng, M.~I. Jordan, and Y.~Weiss.
\newblock {On Spectral Clustering: Analysis and an Algorithm}.
\newblock In {\em Advances in Neural Information Processing Systems},
  volume~14, pages 849--856, Cambridge, MA, 2001. MIT Press.

\bibitem{Niu:2011}
D.~Niu, J.~G. Dy, and M.~I. Jordan.
\newblock Dimensionality reduction for spectral clustering.
\newblock {\em AISTATS}, 2011.

\bibitem{Kernighan}
B.~Kernighan and S.~Lin.
\newblock {An Efficient Heuristic Procedure for Partitioning Graphs}.
\newblock {\em The Bell System Technical Journal}, 49(2):291--307, 1970.

\bibitem{Chan}
P.~Chan, M.~Schlag, and J.~Zien.
\newblock {Spectral $k$-Way Ratio Cut Partitioning}.
\newblock {\em IEEE Transactions on Computer-Aided Design of Integrated
  Circuits and Systems}, 13:1088--1096, 1994.

\bibitem{Yu}
S.~X. Yu and J.~Shi.
\newblock {Multiclass Spectral Clustering}.
\newblock In {\em Proceedings Ninth IEEE International Conference on Computer
  Vision}, volume~1, pages 313--319, 2003.

\bibitem{scikit-learn}
F.~Pedregosa, G.~Varoquaux, A.~Gramfort, V.~Michel, B.~Thirion, O.~Grisel,
  M.~Blondel, P.~Prettenhofer, R.~Weiss, V.~Dubourg, J.~Vanderplas, A.~Passos,
  D.~Cournapeau, M.~Brucher, M.~Perrot, and E.~Duchesnay.
\newblock Scikit-learn: Machine learning in {P}ython.
\newblock {\em Journal of Machine Learning Research}, 12:2825--2830, 2011.

\bibitem{Vassilvitskii}
D.~Arthur and S.~Vassilvitskii.
\newblock {$k$-means++: The Advantage of Careful Seeding}.
\newblock In {\em Proceedings of the Eighteenth annual ACM-SIAM Symposium on
  Discrete Algorithms}, pages 1027--1035, Philadelphia, PA, USA, 2007. Society
  for Industrial and Applied Mathematics.

\bibitem{Fortunato:2016}
S.~Fortunato and D.~Hric.
\newblock {Community Detection in Networks: A User guide}.
\newblock {\em Physics Reports}, 659:1--44, 2016.

\bibitem{Abe:2018}
E.~Abbe.
\newblock {Community Detection and Stochastic Block Models: Recent
  Developments}.
\newblock {\em Journal of Machine Learning Research}, 18:1--86, 2018.

\bibitem{Zdeborova:2011}
A.~Decelle, F.~Krzakala, C.~Moore, and L.~Zdeborov{\'{a}}.
\newblock {Asymptotic Analysis of the Stochastic Block Model for Modular
  Networks and its Algorithmic Applications}.
\newblock {\em {Phys. Rev. E}}, 86:066106, 2011.

\bibitem{Massoulie:2014}
L.~Massouli{\'e}.
\newblock {Community Detection Thresholds and the Weak Ramanujan Property}.
\newblock In {\em {Proceedings of the Forty-sixth Annual ACM Symposium on
  Theory of Computing}}, STOC '14, pages 694--703, 2014.

\bibitem{Mossel:2018}
E.~Mossel, J.~Neeman, and A.~Sly.
\newblock "a proof of the block model threshold conjecture".
\newblock {\em Combinatorica}, 38(3):665--708, 2018.

\bibitem{Abe:2018b}
E.~Abbe and C.~Sandon.
\newblock {Proof of the Achievability Conjectures for the General Stochastic
  Block Model}.
\newblock {\em Comm. Pure and Applied Math.}, LXXI:1334--1406, 2018.

\bibitem{Zdeborova:2013}
F.~Krzakala, C.~Moore, E.~Mossel, J.~Neeman, A.~Sly, and L.~Zdeborov{\'{a}}.
\newblock {Spectral Redemption in Clustering Sparse Networks}.
\newblock {\em {Proc. Nat. Acad. Sci.}}, 110(52):20935–--20940, 2013.

\bibitem{Javanmard:2016}
A.~Javanmard, A.~Montanari, and F.~Ricci-Tersenghi.
\newblock {Phase transitions in semidefinite relaxations}.
\newblock {\em Proc. Nat. Acad. Sci.}, 113(16):E2218--E2223, 2016.

\bibitem{Watanabe:2009}
Y.~Watanabe and K.~Fukumizu.
\newblock {Graph Zeta Function in the Bethe Free Energy and Loopy Belief
  Propagation}.
\newblock pages 2017--2025, 2009.

\bibitem{Zdeborova:2014}
A.~Saade, F.~Krzakala, and L.~Zdeborov{\'{a}}.
\newblock {Spectral Clustering of Graphs with the Bethe Hessian}.
\newblock {\em {Proc. Int. Conf. Neural Information Processing Systems}}, 2014.

\bibitem{Girvan:2002}
M~Girvan and M.~E.~J. Newman.
\newblock {Community Structure in Social and Biological Networks}.
\newblock {\em {Proc. Nat. Acad. Sci.}}, 99(12):7821--7826, 2002.

\bibitem{Dua:2019}
D.~Dua and C.~Graff.
\newblock {UCI} machine learning repository, 2017.
\newblock University of California, Irvine, School of Information and Computer
  Sciences, http://archive.ics.uci.edu/ml.

\bibitem{Guvenir1998}
H.~A. G{\"u}venir, G.~Demir{\"o}za, and N.~Ilterb.
\newblock Learning differential diagnosis of erythemato-squamous diseases using
  voting feature intervals.
\newblock {\em Artificial Intelligence in Medicine}, 13(3):147--165, 1998.

\bibitem{karate}
W.~W. Zachary.
\newblock {An Information Flow Model for Conflict and Fission in Small Groups}.
\newblock {\em J. Anthropological Research}, 33:452--473, 1977.

\bibitem{dolphins}
D.~Lusseau, K.~Schneider, O.~J. Boisseau, P.~Haase, E.~Slooten, and S.~M.
  Dawson.
\newblock {The Bottlenose Dolphin Community of Doubtful Sound Features a Large
  Proportion of Long-Lasting Associations}.
\newblock {\em Behavioral Ecology and Sociobiology}, 54:396--405, 2003.

\bibitem{polbooks}
V.~Krebs.

\bibitem{Fortunato:2010}
S.~Fortunato.
\newblock {Community Detection in Graphs}.
\newblock {\em Physics Reports}, 486:75--174, 2010.

\bibitem{drosophila}
K.~Eichler, F.~Li, A.~Litwin-Kumar, Y.~Park, I.~Andrade, C.~M.
  Schneider-Mizell, T.~Saumweber, A.~Huser, C.~Eschbach, B.~Gerber, R.~D.
  Fetter, J.~W. Truman, C.~E. Priebe, L.~F. Abbott, A.~S. Thum, M.~Zlatic, and
  A.~Cardona.
\newblock {The Complete Connectome of a Learning and Memory Centre in an Insect
  Brain}.
\newblock {\em Nature}, 548:175 EP--, 2017.

\bibitem{GrQC}
J.~Leskovec, J.~Kleinberg, and C.~Faloutsos.
\newblock {Graph Evolution: Densification and Shrinking Diameters}.
\newblock {\em {ACM Trans. Know. Discov. Data}}, 1:396--405, 2007.

\bibitem{energy}
M.~Rizzo and G.~Szekely.
\newblock {energy: E-Statistics: Multivariate Inference via the Energy of
  Data}.
\newblock R package version 1.7-5; https://CRAN.R-project.org/package=energy,
  2018.

\end{thebibliography}

\end{document}